\def\eqref#1{equation~\ref{#1}}
\def\1{\bm{1}}
\DeclareMathAlphabet{\mathsfit}{\encodingdefault}{\sfdefault}{m}{sl}
\SetMathAlphabet{\mathsfit}{bold}{\encodingdefault}{\sfdefault}{bx}{n}
\DeclareMathOperator*{\argmin}{arg\,min}
\tikzset{ll/.style={color=black}}
\tikzset{hl/.style={color=blue, thick}}
\algnewcommand\algorithmicinput{\textbf{\textsc{Input}:} }
\algnewcommand\Input{\State \algorithmicinput}
\algnewcommand\algorithmicoutput{\textbf{\textsc{Output}:} }
\algnewcommand\Output{\State \algorithmicoutput}
\algnewcommand\algorithmicbreak{\textbf{break} }
\newtheorem{lemma}{Lemma}
\newtheorem{corollary}[lemma]{Corollary}
\newcommand{\dist}[0]{l}
\newsavebox\CBox
\title{Hybrid Search for Efficient Planning with Completeness Guarantees
}
\author{
 Kalle~Kujanp{\"a}{\"a}\textsuperscript{1,3}\thanks{Corresponding author},~
 Joni~Pajarinen\textsuperscript{2,3},~
 Alexander~Ilin\textsuperscript{1,3,4}\\ 
 \textsuperscript{1}Department of Computer Science, Aalto University\\
  \textsuperscript{2}Department of Electrical Engineering and Automation, Aalto University\\
 \textsuperscript{3}Finnish Center for Artificial Intelligence FCAI\\ 
 \textsuperscript{4}System 2 AI\\ 
 {\tt\small \{kalle.kujanpaa,joni.pajarinen,alexander.ilin\}@aalto.fi}
}
\begin{document}

\maketitle

\begin{abstract}
Solving complex planning problems has been a long-standing challenge in computer science. Learning-based subgoal search methods have shown promise in tackling these problems, but they often suffer from a lack of completeness guarantees, meaning that they may fail to find a solution even if one exists. In this paper, we propose an efficient approach to augment a subgoal search method to achieve completeness in discrete action spaces. Specifically, we augment the high-level search with low-level actions to execute a multi-level (hybrid) search, which we call \emph{complete subgoal search}. This solution achieves the best of both worlds: the practical efficiency of high-level search and the completeness of low-level search. We apply the proposed search method to a recently proposed subgoal search algorithm and evaluate the algorithm trained on offline data on complex planning problems. We demonstrate that our complete subgoal search not only guarantees completeness but can even improve performance in terms of search expansions for instances that the high-level could solve without low-level augmentations. Our approach makes it possible to apply subgoal-level planning for systems where completeness is a critical requirement.
\end{abstract}

\section{Introduction}

Combining planning with deep learning has led to significant advances in many fields, such as automated theorem proving \citep{polu2020generative}, classical board games \citep{silver2018general}, puzzles \citep{agostinelli2019solving}, Atari games \citep{schrittwieser2020mastering}, video compression \citep{mandhane2022muzero}, robotics \citep{driess2023palm}, and autonomous driving \citep{yurtsever2020survey}. However, deep learning-based methods often plan in terms of low-level actions, such as individual moves in chess or commands in Atari games, whereas humans tend to plan by decomposing problems into smaller subproblems \citep{hollerman2000involvement}. This observation has inspired a lot of recent research into methods that solve complex tasks by performing hierarchical planning and search. In the continuous setting, hierarchical planning has been applied to, for instance, robotic manipulation and navigation \citep{pertsch2020long,nair2019hierarchical, jayaraman2018time, fang2019dynamics, kim2019variational}. Discrete hierarchical planning methods, when trained entirely on offline data, can solve complex problems with high combinatorial complexity. These methods are efficient at long-term planning thanks to the hierarchy reducing the effective planning horizon \citep{czechowski2021subgoal, kujanpaa2023hierarchical, zawalski2022fast}. Furthermore, they reduce the impact of noise on planning \citep{czechowski2021subgoal} and show promising generalization ability to out-of-distribution tasks \citep{zawalski2022fast}. Training RL agents on diverse multi-task offline data has been shown to scale and generalize broadly to new tasks, and being compatible with training without any environment interaction can be seen as an additional strength of these methods \citep{kumar2022offline, driess2023palm, walke2023don}. 

Despite their excellent performance, the discrete subgoal search methods share a weakness: neither AdaSubS \citep{zawalski2022fast}, kSubS \citep{czechowski2021subgoal}, nor HIPS \citep{kujanpaa2023hierarchical} are guaranteed to find a solution even if it exists. All these methods rely on a learned subgoal generator that proposes new subgoals to be used with some classical search algorithms for planning. If the generator fails to perform adequately, the result can be a failure to discover solutions to solvable problems. We call this property the lack of completeness. The guarantee of the algorithm finding a solution is critical for both theoretical science and practical algorithms. In the context of subgoal search, completeness guarantees discovering the solution, even if the generative model is imperfect. Completeness allows extensions and new incremental algorithms that require building on top of exact solutions. One example of that could be curriculum learning. Furthermore, we show that completeness significantly improves the out-of-distribution generalization of subgoal search. Achieving completeness also makes applying high-level search as an alternative to low-level search possible in safety-critical real-world systems. \citet{zawalski2022fast} mentioned that AdaSubS can be made complete by adding an exhaustive one-step subgoal generator that is only utilized when the search would otherwise fail, whereas \citet{kujanpaa2023hierarchical} proposed combining high- and low-level actions to attain completeness of HIPS. However, to the best of our knowledge, this idea has not been formalized, analyzed, or evaluated in prior work.

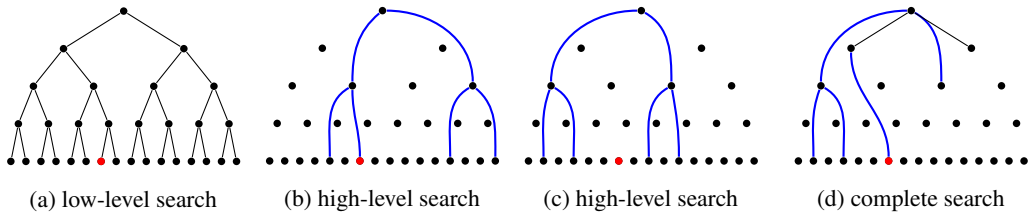
\begin{figure*}
\begin{minipage}{.24\textwidth}
\centering\small
\begin{tikzpicture}[
  every node/.style = {circle,fill,inner sep=1pt}
]
\foreach \x in {0,1,...,15}
{
\node[circle,fill,inner sep=1pt] (n4\x) at (0.2*\x,0) {};
}
\node[circle,fill=red,inner sep=1pt] (n46) at (0.2*6,0) {};

\foreach \x in {0,1,...,7}
{
\node[circle,fill,inner sep=1pt] (n3\x) at (0.1 + 0.4*\x,0.5) {};
}
\foreach \x in {0,1,...,3}
{
\node[circle,fill,inner sep=1pt] (n2\x) at (0.3 + 0.8*\x,1) {};
}
\foreach \x in {0,1}
{
\node[circle,fill,inner sep=1pt] (n1\x) at (0.7 + 1.6*\x,1.5) {};
}
\foreach \x in {0}
{
\node[circle,fill,inner sep=1pt] (n0\x) at (1.5,2) {};
}

\draw[ll] (n00) to (n10);
\draw[ll] (n00) to (n11);

\draw[ll] (n10) to (n20);
\draw[ll] (n10) to (n21);
\draw[ll] (n11) to (n22);
\draw[ll] (n11) to (n23);

\draw[ll] (n20) to (n30);
\draw[ll] (n20) to (n31);
\draw[ll] (n21) to (n32);
\draw[ll] (n21) to (n33);
\draw[ll] (n22) to (n34);
\draw[ll] (n22) to (n35);
\draw[ll] (n23) to (n36);
\draw[ll] (n23) to (n37);

\draw[ll] (n30) to (n40);
\draw[ll] (n30) to (n41);
\draw[ll] (n31) to (n42);
\draw[ll] (n31) to (n43);
\draw[ll] (n32) to (n44);
\draw[ll] (n32) to (n45);
\draw[ll] (n33) to (n46);
\draw[ll] (n33) to (n47);
\draw[ll] (n34) to (n48);
\draw[ll] (n34) to (n49);
\draw[ll] (n35) to (n410);
\draw[ll] (n35) to (n411);
\draw[ll] (n36) to (n412);
\draw[ll] (n36) to (n413);
\draw[ll] (n37) to (n414);
\draw[ll] (n37) to (n415);

\end{tikzpicture}
\\[2mm]
(a) low-level search

\end{minipage}
\begin{minipage}{.24\textwidth}
\centering\small
\begin{tikzpicture}[
  every node/.style = {circle,fill,inner sep=1pt}
]
\foreach \x in {0,1,...,15}
{
\node[circle,fill,inner sep=1pt] (n4\x) at (0.2*\x,0) {};
}
\node[circle,fill=red,inner sep=1pt] (n46) at (0.2*6,0) {};

\foreach \x in {0,1,...,7}
{
\node[circle,fill,inner sep=1pt] (n3\x) at (0.1 + 0.4*\x,0.5) {};
}
\foreach \x in {0,1,...,3}
{
\node[circle,fill,inner sep=1pt] (n2\x) at (0.3 + 0.8*\x,1) {};
}
\foreach \x in {0,1}
{
\node[circle,fill,inner sep=1pt] (n1\x) at (0.7 + 1.6*\x,1.5) {};
}
\foreach \x in {0}
{
\node[circle,fill,inner sep=1pt] (n0\x) at (1.5,2) {};
}

\draw[hl] (n00) to[out=-140,in=90] (n21);
\draw[hl] (n00) to[out=0,in=90] (n23);

\draw[hl] (n21) to[out=-150,in=90] (n44);
\draw[hl] (n21) to[out=-90,in=90] (n46);

\draw[hl] (n23) to[out=210,in=90] (n412);
\draw[hl] (n23) to[out=-40,in=90] (n415);

\end{tikzpicture}
\\[2mm]
(b) high-level search

\end{minipage}
\begin{minipage}{.24\textwidth}
\centering\small
\begin{tikzpicture}[
  every node/.style = {circle,fill,inner sep=1pt}
]
\foreach \x in {0,1,...,15}
{
\node[circle,fill,inner sep=1pt] (n4\x) at (0.2*\x,0) {};
}
\node[circle,fill=red,inner sep=1pt] (n46) at (0.2*6,0) {};

\foreach \x in {0,1,...,7}
{
\node[circle,fill,inner sep=1pt] (n3\x) at (0.1 + 0.4*\x,0.5) {};
}
\foreach \x in {0,1,...,3}
{
\node[circle,fill,inner sep=1pt] (n2\x) at (0.3 + 0.8*\x,1) {};
}
\foreach \x in {0,1}
{
\node[circle,fill,inner sep=1pt] (n1\x) at (0.7 + 1.6*\x,1.5) {};
}
\foreach \x in {0}
{
\node[circle,fill,inner sep=1pt] (n0\x) at (1.5,2) {};
}

\draw[hl] (n00) to[out=180,in=90] (n20);
\draw[hl] (n00) to[out=-30,in=90] (n22);

\draw[hl] (n20) to[out=250,in=90] (n41);
\draw[hl] (n20) to[out=-30,in=90] (n43);

\draw[hl] (n22) to[out=210,in=90] (n48);
\draw[hl] (n22) to[out=-80,in=90] (n410);

\end{tikzpicture}
\\[2mm]
(c) high-level search
\end{minipage}
\hfil
\begin{minipage}{.24\textwidth}
\centering\small
\begin{tikzpicture}[
  every node/.style = {circle,fill,inner sep=1pt}
]
\foreach \x in {0,1,...,15}
{
\node[circle,fill,inner sep=1pt] (n4\x) at (0.2*\x,0) {};
}
\node[circle,fill=red,inner sep=1pt] (n46) at (0.2*6,0) {};

\foreach \x in {0,1,...,7}
{
\node[circle,fill,inner sep=1pt] (n3\x) at (0.1 + 0.4*\x,0.5) {};
}
\foreach \x in {0,1,...,3}
{
\node[circle,fill,inner sep=1pt] (n2\x) at (0.3 + 0.8*\x,1) {};
}
\foreach \x in {0,1}
{
\node[circle,fill,inner sep=1pt] (n1\x) at (0.7 + 1.6*\x,1.5) {};
}
\foreach \x in {0}
{
\node[circle,fill,inner sep=1pt] (n0\x) at (1.5,2) {};
}

\draw[hl] (n00) to[out=180,in=90] (n20);
\draw[hl] (n00) to[out=-30,in=90] (n22);

\draw[hl] (n20) to[out=250,in=90] (n41);
\draw[hl] (n20) to[out=-30,in=90] (n43);


\draw[ll] (n00) to (n10);
\draw[ll] (n00) to (n11);
\draw[hl] (n10) to[out=-90,in=90] (n46);

\end{tikzpicture}
\\[2mm]
(d) complete search
\end{minipage}
\caption{
(a): Low-level search systematically visits all states reachable from the root node and, therefore, is guaranteed to find a path to the terminal state (shown in red).
(b): High-level search can find a solution with much fewer steps due to its ability to operate with high-level actions that span several time steps.
(c): High-level search may fail to find a solution due to an imperfect subgoal generation model.
(d): Complete subgoal search can find a solution with few steps thanks to high-level actions, and it has completeness guarantees due to also considering low-level actions.}
\label{fig:method}
\end{figure*}

We present a multi-level (hybrid) search method, \textit{complete subgoal search}, that combines hierarchical planning and classical exhaustive low-level search. The idea is illustrated in Figure~\ref{fig:method}. We apply our complete search approach to HIPS \citep{kujanpaa2023hierarchical} which 1)~has been shown to outperform several other subgoal search algorithms in challenging discrete problem domains, 2)~does not require training multiple subgoal generators in parallel to perform adaptive-length planning, and 3)~is particularly suited for being combined with low-level actions due to using Policy-guided Heuristic Search (PHS) as the main search algorithm \citep{kujanpaa2023hierarchical, orseau2021policy}. We use the name HIPS-$\varepsilon$ for the proposed enhancement of HIPS with complete subgoal search. We argue that the idea suggested in \citep{zawalski2022fast} can be seen as a specific case of our search method that works well in some environments but can be improved upon in others. We evaluate HIPS-$\varepsilon$ on challenging, long-term discrete reasoning problems and show that it outperforms HIPS, other subgoal search methods, and strong offline reinforcement learning (RL) baselines. Not only does HIPS-$\varepsilon$ attain completeness, but our complete subgoal search also demonstrates improved performance on problem instances that subgoal search alone was sufficient for solving. 

\section{Related Work}

We propose to augment the high-level search used in learning-based hierarchical planning with low-level actions to achieve completeness while even improving the search performance in terms of node expansions. First, we discuss our work in the general context of hierarchical planning and then focus on the closest methods in discrete subgoal search, to which we compare the proposed approach.

\paragraph {Continuous Hierarchical Planning} Hierarchical planning has been widely applied in the continuous setting. The Cross-Entropy Method (CEM) \citep{rubinstein2004cross} is often used as the planning algorithm for different hierarchical planning methods \citep{li2022hierarchical, nair2019hierarchical, jayaraman2018time, pertsch2020keyin}. However, the cross-entropy method is a numerical optimization method, unsuitable for addressing discrete planning problems. There is also prior work on hierarchical planning for continuous and visual tasks, including control and navigation, where CEM is not used as the optimization algorithm \citep{kurutach2018learning, nasiriany2019planning, fang2019dynamics, pertsch2020long, savinov2018semi, zhang2021world}. \citet{kim2021landmark} plan how to select landmarks and use them to train a high-level policy for generating subgoals. Planning with search is a common approach in classical AI research \citep{russell2010artificial}. Combining search with hierarchical planning has been proposed \citep{liu2020hallucinative, gao2017intention}, but these approaches are generally inapplicable to complex, discrete reasoning domains, where precise subgoal generation and planning are necessary \citep{zawalski2022fast}.

\paragraph {Recursive Subgoal-Based Planning} In this work, we focus on search algorithms where subgoals are generated sequentially from the previous subgoal. Recursive subgoal generation is an orthogonal alternative to our approach. In certain domains, given a start and a goal state, it is possible to hierarchically partition the tasks into simpler ones, for instance, by generating subgoals that are approximately halfway between the start and the end state. This idea can be applied repeatedly for an efficient planning algorithm \citep{parascandolo2020divide, pertsch2020long, jurgenson2020sub}. However, the recursive approach can be difficult to combine with search and limits the class of problems that can be solved.

\paragraph{Discrete Subgoal Search}

Discrete subgoal search methods can solve complex reasoning problems efficiently with limited search node expansions. S-MCTS \citep{gabor2019subgoal} is a method for MCTS-based subgoal search with predefined subgoal generators and heuristics, which significantly limits its usability to novel problems. \citet{allen2020efficient} improve the efficiency of search through temporal abstraction by learning macro-actions, which are sequences of low-level actions. Methods that learn subgoal generators, on the other hand, generally suffer from a lack of completeness guarantees. kSubS \citep{czechowski2021subgoal} learns a subgoal generator and performs planning in the subgoal space to solve demanding reasoning tasks. AdaSubS \citep{zawalski2022fast} builds on top of kSubS and proposes learning multiple subgoal generators for different distances and performing adaptive planning. HIPS \citep{kujanpaa2023hierarchical} learns to segment trajectories using RL and trains one generator that can propose multiple-length subgoals for adaptive planning. 

\section{Method}

We present HIPS-$\varepsilon$, an extension of \textit{Hierarchical Imitation Planning with Search} (HIPS), a search-based hierarchical imitation learning algorithm for solving difficult goal-conditioned reasoning problems \citep{kujanpaa2023hierarchical}. We use the Markov decision process (MDP) formalism, and the agent's objective is to enter a terminal state in the MDP. However, we deviate from the reward maximization objective. Instead, the objective is to minimize the total search loss, that is, the number of search nodes expanded before a solution is found. A similar objective has been adopted in prior work on subgoal search and, in practice, the solutions found are efficient \citep{czechowski2021subgoal, zawalski2022fast}.

We assume the MDPs to be deterministic and fully observable with a discrete state space $\mathcal{S}$ and action space $\mathcal{A}$. We also assume the standard offline setting: the agent cannot interact with the environment during training, and it learns to solve tasks only from $\mathcal{D}$, an existing dataset of expert demonstrations. These trajectories may be heavily suboptimal, but all lead to a goal state. Therefore, the agent should be capable of performing stitching to discover the solutions efficiently \citep{singh2020cog}.

\subsection{Preliminaries}

HIPS learns to segment trajectories into subgoals using RL and a subgoal-conditioned low-level policy $\pi(a | s, s_g)$. The segmented trajectories are used to train a generative model $p(s_g | s)$ over subgoals $s_g$ that is implemented as a VQVAE with discrete latent codes \citep{van2017neural}, a strategy inspired by \citet{ozair2021vector}. The VQVAE learns a state-conditioned prior distribution $p(e | s)$ over latent codes $e$ from a codebook $\mathbb{E}$ and a decoder $g(s_g | e, s)$ acting as a generator that outputs a subgoal $s_g$ given the latent code $e$ and state $s$ deterministically as $s_g = g(e, s)$. Each code $e_k \in \mathbb{E}$ can be considered a state-dependent high-level action that induces a sequence of low-level actions $A_k(s) = (a_1, \dots, a_{n_k})$, $a_i \in \mathcal{A}$, that takes the environment from state $s$ to $s_g$. The low-level action sequence is generated deterministically by the subgoal-conditioned policy $\pi(a | s, s_g)$, when the most likely action is used at each step. The prior $p(e | s)$ can be interpreted as a high-level policy $\pi_\text{SG}(e | s)$ that assigns a probability to each latent code. HIPS also learns a heuristic that predicts the number of low-level actions $a \in \mathcal{A}$ needed to reach a terminal state from $s$ and a single-step dynamics model $f_\text{dyn}({s_{i+1} | a_i, s_i})$. The heuristic is denoted by $V(s)$ in \citep{kujanpaa2023hierarchical}, but we denote it by $h$ in this paper.

HIPS performs high-level planning in the subgoal space with PHS \citep{orseau2021policy}, although other search methods can be used as well. PHS is a variant of best-first search in which the algorithm maintains a priority queue and always expands the node $n$ with the lowest evaluation function value. There is a non-negative loss function $L$, and when the search expands node $n$, it incurs a loss $L(n)$. The objective of PHS is to minimize the total \textit{search} loss. The evaluation function of PHS is
\begin{equation}
    \varphi(n) = \eta(n)\frac{g(n)}{\pi(n)},
    \label{eq:phs_eval}
\end{equation}
where $\eta(n) \ge 1$ is a heuristic factor that depends on the heuristic function $h(n)$. $g(n)$ is the \textit{path} loss, the sum of the losses from the root $n_0$ to $n$. $\pi(n) \le 1$ is the probability of node $n$ that is defined recursively: $\pi(n') = \pi(n' | n)\pi(n), \pi(n_0) = 1$, $n$ is the parent of $n'$ and $n \in \text{desc}_*(n_0)$, that is, $n$ is a descendant of the root node $n_0$.
When HIPS plans, each search node $n$ corresponds to a state $s$, $g(n)$ to the number of low-level steps needed to reach $n$ from the root $n_0$, and $h(n)$ to the output of the value function $V(s)$. The children $\mathcal{C}(n)$ of node $n$ are the valid subgoals generated by 
the VQVAE decoder $g(e_k, s)$ for all $e_k \in \mathbb{E}$. The search policy $\pi(n)$ is induced by the conditional policy $\pi(n' | n)$, which is represented by the state-conditioned prior $p(e | s)$.

\subsection{Complete Subgoal Search}

\label{sec:hs}

HIPS performs search solely in the space of subgoals proposed by the trained generator network, which may cause failures to discover solutions to solvable problems due to possible imperfections of the subgoal generation model (see Fig.~\ref{fig:method}c). This problem can be tackled in discrete-action MDPs by augmenting the search with low-level actions (see Fig.~\ref{fig:method}d).
In this work, we apply this idea to HIPS to guarantee solution discovery when the solution exists while still utilizing the subgoal generator for temporally abstracted efficient search.

Formally, we propose modifying the search procedure of HIPS such that in addition to the subgoals $\{s_H = g(e, s), \forall e \in \mathbb{E} \}$ produced by the HIPS generator, 
the search also considers states $\{s_L = T(s, a), \forall a \in \mathcal{A} \}$ that result from the agent taking every low-level action $a \in \mathcal{A}$ in state $s$, where $T(s, a)$ is the environment transition function. We assume that $T$ is known but we also evaluate the method with a learned $T$. The augmented action space is then $\mathcal{A}^+ = \mathcal{A} \cup \mathbb{E}$. We use PHS as the search algorithm and compute the node probabilities used for the evaluation function~(\ref{eq:phs_eval}) as
\[
\pi(n) = \prod_{j=1}^d \pi(s_{j}(n) | s_{j-1}(n))
\,,
\]
where $s_d(n)$ is the state that corresponds to node $n$ at depth $d$, $s_{d-1}(n), s_{d-2}(n), ..., s_0(n)$ are the states of the ancestor nodes of $n$ and $\pi(s_{j}(n) | s_{j-1}(n))$ is the search policy.
We propose to compute the probabilities $\pi(s_{j} | s_{j-1})$ in the following way:
\begin{align}
\pi(s_{j} | s_{j-1}) = \begin{cases}
(1-\varepsilon) \pi_{\text{SG}}(e_k | s_{j-1}) & \text{if $s_{j} = g(e_k, s_{j-1})$ is proposed by HIPS}\\
\varepsilon \, \pi_\text{BC}(a | s_{j-1}) &\text{if $s_{j} = T(s_{j-1}, a)$ is proposed by low-level search}
\end{cases}
\label{eq:pi}
\end{align}
where $\pi_{\text{SG}}(e_k | s_{j-1}) = p(e_k | s_{j-1})$ is the prior over the high-level actions learned by HIPS and
$\pi_\text{BC}(a | s_{j-1})$ is a low-level policy that we train from available expert demonstrations with BC.
We use hyperparameter $\varepsilon$ to balance the probabilities computed with the high and low-level policies: higher values of $\varepsilon$ prioritize more low-level exploration. We call this complete search HIPS-$\varepsilon$.

The proposed complete search approach can be combined with any policy-guided search algorithm, but we focus on PHS due to its state-of-the-art performance. Complete search can also be used with policyless search algorithms, but in that case, $\varepsilon$ cannot directly control the frequency of low and high-level actions.

\citet{zawalski2022fast} suggested using low-level actions only when the high-level search would otherwise fail. In the context of PHS, this can be interpreted as having an infinitesimal $\varepsilon$ such that every high-level action has a lower evaluation function value (\ref{eq:phs_eval}) and, therefore, a higher priority than every low-level action. 
With some abuse of notation, we refer to this approach as $\varepsilon\to0$.

\subsection{Complete Subgoal Search Heuristic}

\label{sec:heuristic}

The objective of PHS is to minimize the search loss, and \citet{orseau2021policy} propose using a heuristic to estimate the $g$-cost of the least costly solution node $n^*$ that is a descendant of the current search node $n$. They propose to approximate the probability $\pi(n^*)$ for node $n^*$ that contains a found solution as 
\begin{equation}
    \pi(n^*) = [\pi(n)^{1/g(n)}]^{g(n) + h(n)}
    \label{eq:phs_pi_n_star}
\end{equation}
and use this approximation to arrive at a heuristic factor
$
    \hat{\eta}_h(n) = \frac{1 + h(n)/g(n)}{\pi(n)^{h(n)/g(n)}}
$
to be used in (\ref{eq:phs_eval}).
PHS* is the variant of PHS that uses this heuristic factor. The derivation of the heuristic factor $\hat{\eta}_h(n)$ assumes that $g(n)$, the path loss of node $n$, and $h(n)$, the heuristic function approximating the distance to the least-cost descendant goal node $n^*$ are expressed in the same scale. This does not hold for the proposed search algorithm. The search objective of HIPS-$\varepsilon$ is to minimize the number of node expansions, which implies that $g(n)$ is equal to the depth of node $n$, whereas the heuristic $h(n)$ is trained to predict the number of low-level actions to the goal. One solution is to let $g(n)$ be equal to the number of low-level steps to node $n$. However, this would distort the search objective. The cost of expanding a node corresponding to a subgoal would equal the length of the corresponding low-level action sequence, whereas the loss of low-level actions would be one. However, the cost should be the same for every node, as the objective is to minimize the number of node expansions. 

Instead, we propose re-scaling the heuristic factor $h$ to be equal to the estimated number of search nodes on the path from $n$ to the goal node $n^*$ by dividing the expected number of low-level actions from node $n$ to the terminal node by the average low-level distance between the nodes on the path from $n_0$ to node $n$.
Let $g(n)$ be equal to $d(n)$, the depth of the node, and let $\dist(n)$ be the number of low-level actions from the root node. Then, we define a scaled heuristic $\hat{h}(n) = \frac{h(n) g(n)}{\dist(n)}$ and by using it instead of $h(n)$ in (\ref{eq:phs_pi_n_star}), we get a new approximation $\pi(n^*) = \pi^{(1+h(n)/\dist(n))}$, which yields the following heuristic factor and evaluation function:
 \begin{equation}
    \hat{\eta}_{\hat{h}} = \frac{1 + h(n)/\dist(n)}{\pi(n)^{h(n)/\dist(n)}},
    \qquad
    \hat{\varphi}_{\hat{h}}(n) = \frac{g(n)\cdot(1 + h(n)/\dist(n))}{\pi(n)^{1 + h(n)/\dist(n)}}.
    \label{eq:evaluation}
\end{equation}
For the full derivation, please see Appendix~\ref{app:derivation}. Note that $\hat{\eta}_{\hat{h}}$ may not be PHS-admissible, even if $h(n)$ were admissible. In HIPS, there are no guarantees of the learned heuristic $h(n)$ being admissible. 

\subsection{Analysis of HIPS-$\varepsilon$}

\label{sec:analysis}

\citet{orseau2021policy} showed that the search loss of PHS has an upper bound:
\begin{equation}
    \label{eq:phs_ub}
    L(\text{PHS}, n^*) \le L_{\text{U}}(\text{PHS}, n^*) = \frac{g(n^*)}{\pi(n^*)} \eta^+(n^*)\sum_{n \in \mathcal{L}_\varphi(n^*)} \frac{\pi(n)}{\eta^+(n)},
\end{equation}
where $n^*$ is the solution node returned by PHS, $g(n)$ is the path loss from the root to $n$, $\pi$ the search policy, and $\eta^+$ the modified heuristic factor corresponding to the monotone non-decreasing evaluation function $\varphi^+(n)$, and $\mathcal{L}_\varphi(n^*)$ the set of nodes that have been expanded before $n^*$, but the children of which have not been expanded. Now, given the recursively defined policy $\pi(n)$ that depends on $\pi_\text{BC}$ and $\pi_\text{SG}$ as outlined in Subsection~\ref{sec:hs}, the upper bound derived in (\ref{eq:phs_ub}) is unchanged.

However, the upper bound of (\ref{eq:phs_ub}) can be seen as uninformative in the context of complete search, as it depends on the high-level policy $\pi_\text{SG}$. In general, we cannot make any assumptions about the behavior of the high-level policy as it depends on the subgoals proposed by the generator network. However, we can derive the following upper bound, where the probability of the terminal node $n^*$ is independent of the behavior of the generator, and consequently, the high-level policy $\pi_\text{SG}$:
\begin{corollary}
    \label{lem:phsh_ub}
    (Subgoal-PHS upper bound without $\pi_\text{SG}$ in the denominator). For any non-negative loss function $L$, non-empty set of solution nodes $\mathcal{N}_\mathcal{G} \subseteq \text{desc}_*(n_0)$, low-level policy $\pi_\text{BC}$, high-level policy $\pi_\text{SG}$, $\varepsilon \in (0, 1]$, policy $\pi(n' | n)$ as defined in (\ref{eq:pi}), and heuristic factor $\eta(\cdot) \ge 1$, Subgoal-PHS returns a solution node $n^* \in \argmin_{n^* \in \mathcal{N}_\mathcal{G}} \varphi^+(n^*)$.
    Then, there exists a terminal node $\hat{n} \in \mathcal{N}_\mathcal{G}$ such that $\text{state}(\hat{n}) = \text{state}(n^*)$, where $\hat{n}$ corresponds to low-level trajectory $(s_0, a_0, \dots, a_{N-1}, \text{state}(n^*))$ and the search loss is bounded by
    \begin{equation}
    L(\text{PHS-h}, n^*) \le L(\text{PHS-h}, \hat{n}) \le \frac{g(\hat{n})}{\varepsilon^N \prod_{i=0}^{N-1} \pi_\text{BC}(a_i | s_i)} \eta^+(\hat{n})\sum_{n \in \mathcal{L}_\varphi(\hat{n})} \frac{\pi(n)}{\eta^+(n)}
    \label{eq:phsh_ub}
    \end{equation}
\end{corollary}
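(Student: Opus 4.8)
The plan is to reduce the claim to the generic PHS upper bound of \citet{orseau2021policy}, already restated in~(\ref{eq:phs_ub}) and noted in Subsection~\ref{sec:analysis} to survive verbatim for the recursively-defined policy $\pi$ of~(\ref{eq:pi}), by instantiating that bound at a carefully chosen purely low-level solution node $\hat n$ rather than at the node $n^*$ that the search actually returns. The single new idea is that the all-low-level path to $\text{state}(n^*)$ has a probability that factors entirely through $\pi_\text{BC}$ and $\varepsilon$, with no $\pi_\text{SG}$ left.

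First I would construct $\hat n$. Since $n^* \in \mathcal N_\mathcal G$, its state is a goal state, and the path from the root $n_0$ to $n^*$ in the augmented search tree consists of edges labelled by actions in $\mathcal A^+ = \mathcal A \cup \mathbb E$. Each high-level edge $s \to g(e_k, s)$ is by construction realised by the low-level action sequence $A_k(s) = (a_1, \dots, a_{n_k})$ produced by the subgoal-conditioned policy, so concatenating these sequences in order (and keeping the already-low-level edges unchanged) unrolls the path to $n^*$ into a purely low-level trajectory $(s_0, a_0, \dots, a_{N-1}, \text{state}(n^*))$, where $N$ is the total length of the unrolled sequence. Every one-step low-level transition is a legitimate child in the augmented tree, since the low-level children of a state $s$ are exactly $\{T(s,a) : a \in \mathcal A\}$; hence this trajectory defines a node $\hat n \in \text{desc}_*(n_0)$ with $\text{state}(\hat n) = \text{state}(n^*)$, and as its state is terminal, $\hat n \in \mathcal N_\mathcal G$.

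Next I would establish the two inequalities. For the first, $L(\text{PHS-h}, n^*) \le L(\text{PHS-h}, \hat n)$: PHS is best-first with respect to the monotone non-decreasing evaluation function $\varphi^+$, and $n^* \in \argmin_{n \in \mathcal N_\mathcal G} \varphi^+(n)$, so $\varphi^+(n^*) \le \varphi^+(\hat n)$; interpreting $L(\text{PHS-h}, m)$ as the cumulative search loss accrued by the time node $m$ is expanded (in the run, possibly hypothetically extended past the stopping point), the $\varphi^+$-minimal solution node $n^*$ is expanded no later than $\hat n$, so the loss to return $n^*$ is at most the loss associated with the target $\hat n$. For the second, I would apply~(\ref{eq:phs_ub}) with $\hat n$ playing the role of the target solution node — which is legitimate since that bound holds for any solution node and its right-hand side, including the frontier set $\mathcal L_\varphi(\hat n)$ and the modified heuristic factor $\eta^+$, is a function of the tree structure and of $\varphi^+$ only — giving
\[
L(\text{PHS-h}, \hat n) \le \frac{g(\hat n)}{\pi(\hat n)}\, \eta^+(\hat n) \sum_{n \in \mathcal L_\varphi(\hat n)} \frac{\pi(n)}{\eta^+(n)}.
\]

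Finally I would evaluate $\pi(\hat n)$. Because $\hat n$ is reached through the low-level trajectory $(s_0, a_0, \dots, a_{N-1}, \cdot)$, the second branch of~(\ref{eq:pi}) gives $\pi(s_{i+1} \mid s_i) = \varepsilon\, \pi_\text{BC}(a_i \mid s_i)$ for each $i = 0, \dots, N-1$, so $\pi(\hat n) = \prod_{i=0}^{N-1} \varepsilon\, \pi_\text{BC}(a_i \mid s_i) = \varepsilon^N \prod_{i=0}^{N-1} \pi_\text{BC}(a_i \mid s_i)$; substituting this into the displayed bound yields exactly~(\ref{eq:phsh_ub}), and the factor $\varepsilon^N > 0$ (using $\varepsilon \in (0,1]$) makes the bound finite and $\pi_\text{SG}$-free. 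The step I expect to be the main obstacle is the first inequality: making precise, within the loss bookkeeping of \citet{orseau2021policy}, that since $n^*$ is a $\varphi^+$-minimiser among solution nodes the search never needs to "go past" $\hat n$, and that $\mathcal L_\varphi(\hat n)$ and $\eta^+$ are well-defined objects attached to the target $\hat n$ even though $\hat n$ itself need not be expanded during the actual run — everything after that is substitution into results already available in the excerpt.
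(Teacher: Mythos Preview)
Your proposal is correct and follows essentially the same route as the paper's proof: construct a purely low-level solution node $\hat n$ with the same terminal state as $n^*$, obtain the first inequality from non-negativity of $L$ together with $n^*$ being $\varphi^+$-minimal among solution nodes (hence expanded no later than $\hat n$), and obtain the second by instantiating the generic PHS bound~(\ref{eq:phs_ub}) at $\hat n$ and reading off $\pi(\hat n)=\varepsilon^N\prod_i \pi_\text{BC}(a_i\mid s_i)$ from~(\ref{eq:pi}). Your explicit unrolling of high-level edges into low-level ones to exhibit $\hat n$ is a welcome detail that the paper leaves implicit, but otherwise the arguments coincide.
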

\begin{proof} Subgoal-PHS does not change the underlying search algorithm, so the assumption of Subgoal-PHS returning the minimum score solution node follows from Theorem 1 in \citep{orseau2021policy}. The first inequality follows from the fact that the loss function $L$ is always non-negative and the node $n^*$ will be expanded before any other node $n$ with $\varphi^+(n) > \varphi^+(n^*)$. In particular, note that it is possible that $n^* = \hat{n}$. The second inequality follows from the Theorem 1 in \citep{orseau2021policy} and $\pi(\hat{n})~=~\varepsilon^N~\prod_{i=0}^{N-1}~\pi_\text{BC}(a_i | s_i)$.
\end{proof}

Now, suppose that we have a problem where there is only one terminal state and sequence of low-level actions that reaches this terminal state, and the search space $\mathcal{S}$ is infinite. Let us denote this trajectory as $\mathcal{T} = (s_0, a_0, s_1, a_1, \dots, s_{N-1}, a_{N-1}, s_N)$. Additionally, assume that the high-level policy $\pi_\text{SG}$ never guides the search towards a solution, that is, 
for each state $s \in \mathcal{T}$,
the low-level action sequence for every subgoal $s_g$ proposed by the generator network
contains an incorrect action.
Furthermore, assume that the generator network can always generate new, reachable subgoals for all states $s \in \mathcal{S}$. Hence, only actions selected by the low-level policy lead to the solution, and thus $\pi(n^*) = \varepsilon^N \prod_{i=0}^{N-1} \pi_\text{BC}(a_i | s_i)$. If $\varepsilon = 0$ or $\varepsilon\to0$, the search does not converge, and the upper bound in (\ref{eq:phs_ub}) approaches infinity.

\citet{czechowski2021subgoal} argue that low-level search suffers more from local noise than subgoal search and using subgoals improves the signal-to-noise ratio of the search, and the results of \citet{kujanpaa2023hierarchical} support this. For complete search, a lower $\varepsilon$ can increase the use of high-level policy in relation to the low-level actions, thereby shortening the effective planning horizon and making the search more effective. Hence, we hypothesize that a low value of $\varepsilon$ is generally preferred. However, (\ref{eq:phsh_ub}) shows that the worst-case performance of the search deteriorates when the value of $\varepsilon$ decreases.

If there is no heuristic, the bound can be simplified similarly as in \citet{orseau2021policy}. This following corollary shows that removing the heuristic factor does not affect the inverse dependence of the worst-case search loss on $\varepsilon$, and in fact, the worst case of (\ref{eq:phsh_ub}), that is, the search failing for $\varepsilon\to0$, is possible even given a perfect heuristic function.
\begin{corollary}
    (Subgoal-PHS upper bound without heuristic). If there is no heuristic, that is, $\forall n, \eta(n) = 1$, the upper bound simplifies to
    \begin{equation}
        L(\text{PHS-h}, n^*) \le L(\text{PHS-h}, \hat{n}) \le \frac{g(\hat{n})}{\varepsilon^N \prod_{i=0}^{N-1} \pi_\text{BC}(a_i | s_i)}
    \end{equation}
\end{corollary}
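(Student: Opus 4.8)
The plan is to obtain this statement as a direct specialization of the previous corollary (Eq.~\ref{eq:phsh_ub}) to the case $\eta \equiv 1$, so the only work is to track what the modified heuristic factor $\eta^+$ and the trailing sum become when the heuristic is removed. I would invoke Eq.~\ref{eq:phsh_ub} verbatim --- it applies for any heuristic factor $\eta(\cdot) \ge 1$, in particular the constant factor $1$ --- keeping the first inequality $L(\text{PHS-h}, n^*) \le L(\text{PHS-h}, \hat{n})$ unchanged, and then simplify the right-hand side.

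First I would argue that when $\eta(n) = 1$ for all $n$, the PHS evaluation function reduces to $\varphi(n) = g(n)/\pi(n)$, which is already monotone non-decreasing along every root-to-node path: $g$ is non-decreasing with depth (it is the path loss, equal to the depth here) and $\pi$ is non-increasing, being a product of factors in $(0,1]$ as given by Eq.~\ref{eq:pi}. Hence $\varphi^+ = \varphi$, the ``$+$''-modification is vacuous, and the modified heuristic factor satisfies $\eta^+(n) = 1$ for every node $n$; this mirrors exactly the heuristic-free simplification carried out in \citep{orseau2021policy}. Substituting $\eta^+(\hat n) = 1$ and $\eta^+(n) = 1$ into Eq.~\ref{eq:phsh_ub} collapses its right-hand side to $\frac{g(\hat n)}{\varepsilon^N \prod_{i=0}^{N-1} \pi_\text{BC}(a_i | s_i)} \sum_{n \in \mathcal{L}_\varphi(\hat n)} \pi(n)$.

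It then remains only to bound $\sum_{n \in \mathcal{L}_\varphi(\hat n)} \pi(n) \le 1$. I would use the standard observation that $\mathcal{L}_\varphi(\hat n)$ --- the expanded nodes whose children have not been expanded --- is an antichain in the search tree (no such node is an ancestor of another), so the cylinder sets of path-continuations through its elements are pairwise disjoint; since $\pi(n)$ is precisely the $\pi$-probability of passing through $n$, their total is at most $1$. Combining this with the previous display yields the claimed bound. The main thing to check --- and the only point that is not pure bookkeeping --- is that this antichain argument, used in \citep{orseau2021policy} for an ordinary policy, still goes through for our recursively defined $\pi$: at every node the outgoing factors of Eq.~\ref{eq:pi} sum to at most $(1-\varepsilon)\sum_{k}\pi_\text{SG}(e_k\mid s) + \varepsilon\sum_{a}\pi_\text{BC}(a\mid s) = (1-\varepsilon)+\varepsilon = 1$ over the augmented action set $\mathcal{A}^+$ (and strictly less if some generated subgoals are invalid and pruned), so $\pi(\cdot\mid n)$ is still a sub-probability kernel and the disjoint-cylinder bound applies unchanged.
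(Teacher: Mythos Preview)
Your proposal is correct and follows essentially the same route as the paper's proof: invoke Corollary~\ref{lem:phsh_ub}, set $\eta^+\equiv 1$, and bound $\sum_{n\in\mathcal{L}_\varphi(\hat n)}\pi(n)\le 1$. The paper's argument is terser---it simply cites \citep{orseau2018single} for the leaf-sum bound---whereas you spell out the antichain reasoning and explicitly verify that the augmented conditional policy of Eq.~\ref{eq:pi} is a sub-probability kernel over $\mathcal{A}^+$; these are useful sanity checks but not a different approach.
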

\begin{proof} The first inequality follows as in Corollary~\ref{lem:phsh_ub}, and the second inequality follows from the assumptions and $\sum_{n' \in \mathcal{L}_\varphi(n)} \pi(n') \le 1$ for all $n$, see \citet{orseau2018single}.
\end{proof}

\begin{figure*}
\centering
\begin{minipage}{.24\linewidth}
\centering
\includegraphics[height=33mm,trim={0mm 2mm 210mm 3mm},clip]{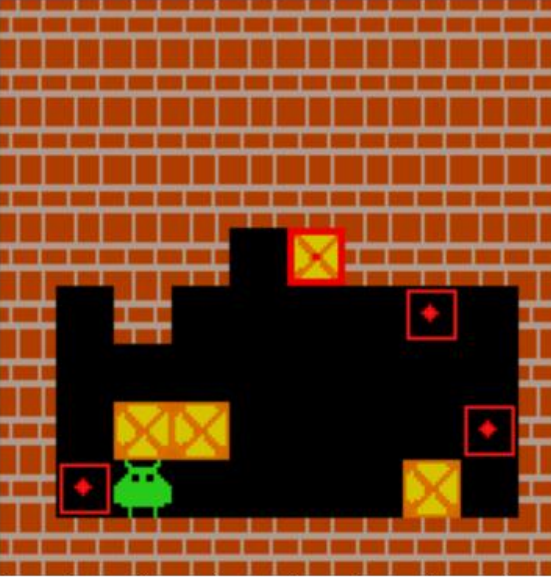}
\\
Sokoban
\end{minipage}
\hfil
\begin{minipage}{.24\linewidth}
\centering
\includegraphics[height=33mm,trim={100mm 0 100mm 0},clip]{images/AllEnvs.pdf}
\\
Sliding Tile Puzzle
\end{minipage}
\hfil
\begin{minipage}{.24\linewidth}
\centering
\includegraphics[height=33mm,trim={175mm 21mm 175mm 21mm},clip]{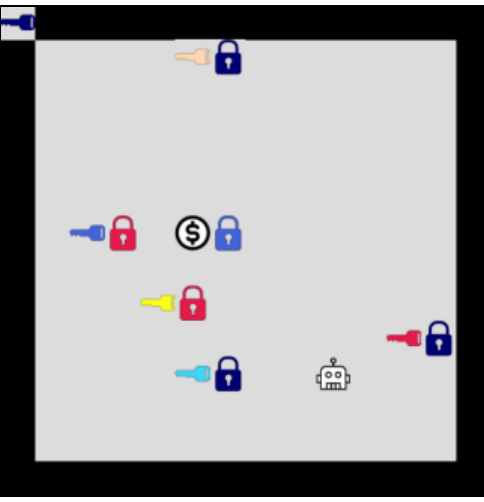}
\\
Box-World
\end{minipage}
\hfil
\begin{minipage}{.24\linewidth}
\centering
\includegraphics[height=33mm,trim={85mm 22.5mm 495mm 22.5mm},clip]{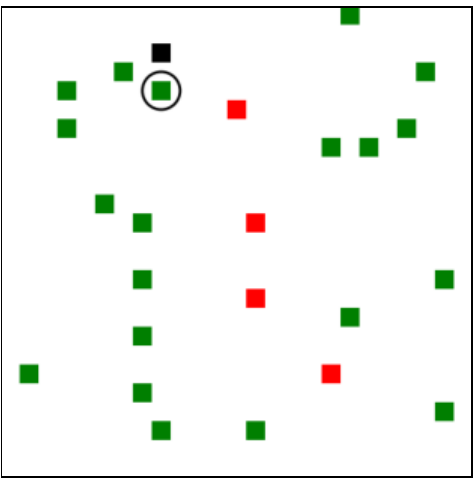}
\\
Travelling Salesman
\end{minipage}

\caption{We evaluate our complete search approach, HIPS-$\varepsilon$, in Sokoban, Sliding Tile Puzzle (STP), Box-World (BW), and Travelling Salesman Problem (TSP). In Sokoban, the agent must push the yellow boxes onto the red target locations. In the Sliding Tile Puzzle, the agent must slide the tiles to sort them from 1 to 24. In Box-World, the task is to collect the gem (marked with \$) by opening locks with keys of the corresponding color. In Travelling Salesman, the agent (marked with a circle) must visit all unvisited cities (red squares) before returning to the start (black square).}
\label{fig:env_illustration}
\end{figure*}

\section{Experiments}

The main objective of our experiments is to evaluate whether the proposed complete search algorithm HIPS-$\varepsilon$ can be combined with an existing subgoal search algorithm, HIPS \citep{kujanpaa2023hierarchical}, to improve its performance in environments that require long-term planning and complex, object-based relational reasoning. We also analyze the sensitivity of HIPS-$\varepsilon$ to the hyperparameter $\varepsilon$ and the impact of the novel heuristic factor and the corresponding PHS* evaluation function~(\ref{eq:evaluation}). The OOD-generalization abilities of HIPS-$\varepsilon$ are also discussed. We use the four environments considered in \citep{kujanpaa2023hierarchical}: Box-World \citep{zambaldi2018relational}, Sliding Tile Puzzle \citep{korf1985depth}, Gym-Sokoban \citep{SchraderSokoban2018} and Travelling Salesman (see Fig.~\ref{fig:env_illustration}). 

We implement HIPS-$\varepsilon$
as proposed in (\ref{eq:pi}) and the policy $\pi_\text{BC}(a | s)$ as a ResNet-based network \citep{he2016deep} and train it using Adam \citep{kingma2014adam}.
We use the implementation of HIPS from the original paper and use PHS* (or our complete variant thereof) as the search algorithm for HIPS and HIPS-$\varepsilon$. In \citep{kujanpaa2023hierarchical}, HIPS was evaluated with GBFS and A* on some environments. HIPS-$\varepsilon$ experiments with these search algorithms can be found in Appendix~\ref{app:gbfs}. We use the versions of HIPS-$\varepsilon$ and HIPS that have access to the environment dynamics (named HIPS-env in \citep{kujanpaa2023hierarchical}) unless stated otherwise.

In Table~\ref{tab:searchExpansions}, we report the proportion of solved puzzles as a function of the number of expanded nodes for the low-level search PHS*, high-level search algorithms HIPS \citep{kujanpaa2023hierarchical}, kSubS \citep{czechowski2021subgoal}, and AdaSubS \citep{zawalski2022fast}, and our HIPS-$\varepsilon$, which is HIPS enhanced with the proposed complete subgoal search. We used the kSubS and AdaSubS results from \citep{kujanpaa2023hierarchical} where they were not evaluated on Box-World. The results show that high-level search alone does not guarantee completeness, which means that for some puzzles the solution is not found even without a limit on the number of expanded nodes ($N=\infty$) due to the generator failing to propose subgoals that lead to a valid terminal state, a phenomenon observed in prior work \citep{czechowski2021subgoal, kujanpaa2023hierarchical, zawalski2022fast}. As expected, HIPS-$\varepsilon$ achieves a 100\% success rate in all environments thanks to augmenting the subgoals with low-level actions. In addition to the number of search node expansions, we analyzed the search cost in terms of wall-clock time and environment steps. HIPS-$\varepsilon$ generally outperforms the baselines also using these metrics. The results of these experiments can be found in Appendices~\ref{app:runtime}~and~\ref{app:steps}.

We also evaluated HIPS-$\varepsilon$ on Sokoban, STP, and TSP with learned transition models and compared it to HIPS with learned models, behavioral cloning (BC), Conservative Q-learning \citep[CQL,][]{kumar2020conservative}, and Decision Transformer \citep[DT,][]{chen2021decision}. We omitted Box-World due to HIPS struggling with the learned dynamics \citep{kujanpaa2023hierarchical}. The results are shown in Table~\ref{tab:model_solved}. HIPS-$\varepsilon$ outperforms the baselines but can fail to find a solution to a solvable problem instance due to the learned dynamics model sometimes outputting incorrect transitions. Completeness can be attained by using the true dynamics for validating solutions and simulating low-level actions, while still minimizing the number of environment steps (see Appendix~\ref{app:modelcompleteness}). For complete results with confidence intervals, please see Appendix~\ref{app:completeresults}.

\begin{table*}[t]
\caption{The success rates (\%) after performing $N$ node expansions for different subgoal search algorithms with access to environment dynamics. For HIPS-$\varepsilon$, we use the value of $\varepsilon$ that yields in the best performance: $\varepsilon \to 0$ for Sokoban, $\varepsilon = 10^{-5}$ for Sliding Tile Puzzle, $\varepsilon = 10^{-3}$ for Box-World and $\varepsilon \to 0$ for Travelling Salesman Problem. HIPS corresponds to HIPS-env in \citep{kujanpaa2023hierarchical} and uses PHS* as the search algorithm in all environments.}
\centering
\begin{tabular}{l|rrrr|rrrr}
\multicolumn{1}{c|}{} & \multicolumn{4}{c|}{\textit{Sokoban}} & \multicolumn{4}{c}{\textit{Sliding Tile Puzzle}}\\
\midrule
$N$ & 50 & 100 & 200 & $\infty$ & 50 & 100 & 200 & $\infty$ \\
\midrule
PHS* (low-level search) & 0.2 & 2.4 & 16.2 & \textbf{100} & 0.0 & 0.0 & 0.0 & \textbf{100} \\
HIPS (high-level search) & 82.0 & 87.8 & 91.6 & 97.9 & 8.7 & 56.8 & 86.3 & 95.0  \\
AdaSubS (high-level search) & 76.4 & 82.2 & 85.7 & 91.3 & 0.0 & 0.0 & 0.0 & 0.0 \\
kSubS (high-level search) & 69.1 & 73.1 & 76.3 & 90.5 & 0.7 & \textbf{79.9} & 89.8 & 93.3 \\
\midrule
HIPS-$\varepsilon$ (complete search) & \textbf{84.3} & \textbf{89.5} & \textbf{93.1} & \textbf{100} & \textbf{18.5} & 69.5 & \textbf{93.8} & \textbf{100}
\\[3mm]

\multicolumn{1}{c|}{} & \multicolumn{4}{c|}{\textit{Box-World}} &  \multicolumn{4}{c}{\textit{Travelling Salesman Problem}}\\
\midrule
$N$ & 5 & 10 & 30 & $\infty$ & 20 & 50 & 100 & $\infty$ \\
\midrule
PHS* (low-level search) & 0.0 & 0.1 & 2.2 & \textbf{100} & 0.0 & 0.0 & 0.0 & \textbf{100} \\
HIPS (high-level search) & 86.3 & 97.9 & 99.9 & 99.9 & \textbf{19.6} & \textbf{88.1} & 97.7 & \textbf{100} \\
AdaSubS (high-level search) & & & & & 0.0 & 0.0 & 0.6 & 21.2 \\
kSubS (high-level search) & & & & & 0.0 & 1.5 & 40.4 & 87.9\\
\midrule
HIPS-$\varepsilon$ (complete search) & \textbf{89.7} & \textbf{98.9} & \textbf{100} & \textbf{100} & 17.9 & 87.4 & \textbf{97.9} & \textbf{100} \\
\end{tabular}

\label{tab:searchExpansions}
\end{table*}

\begin{table*}[t]
\caption{The success rates (\%) of different algorithms  without access to environment dynamics in Sokoban, Sliding Tile Puzzle, and TSP. HIPS-$\varepsilon$ outperforms the baselines and can solve 100\% of the puzzles when the environment dynamics are easy to learn.}
\centering
\begin{tabular}{l|ccccc}
 & HIPS & HIPS-$\varepsilon$ & BC & CQL & DT \\
\midrule
Sokoban & 97.5 & \textbf{100.0} & 18.7 & 3.3 & 36.7 \\
Sliding Tile Puzzle & 94.7 & \textbf{100.0} & 82.5 & 11.7 & 0.0 \\
Travelling Salesman & \textbf{100.0} & \textbf{100.0} & 28.8 & 33.6 & 0.0 \\
\end{tabular}
\label{tab:model_solved}
\end{table*}

\begin{table*}[t]
\caption{The success rates (\%) after $N$ node expansions in Sliding Tile Puzzle computed only on problem instances \emph{solvable by HIPS}.
$\left<N\right>$ is the mean number of expansions needed to find a solution.}
\centering
\begin{tabular}{l|rrrrr|r}
$N$ & 50 & 75 & 100 & 200 & 500 & $\left<N\right>$ \\
\midrule
HIPS & 9.2 & 38.2 & 59.8 & 90.8 & 99.7 & 108.9 \\
HIPS-$\varepsilon$, $\varepsilon = 10^{-5}$ & \textbf{17.3} & \textbf{48.4} & \textbf{68.2} & \textbf{93.5} & \textbf{99.9} & \textbf{95.6} \\
\end{tabular}
\label{tab:stp_solved}
\end{table*}

Interestingly, the results in Table~\ref{tab:searchExpansions} show that augmenting the high-level search with low-level actions may have a positive impact on the performance for the same budget of node expansions $N$: we observe statistically significantly increased success rates for the same $N$ in all environments except in the Travelling Salesman, where the results are approximately equal. 
Next, we analyze whether the increased number of solved puzzles is only because of the high-level search failing to find a solution to some puzzles.
In Table~\ref{tab:stp_solved}, we consider the Sliding Tile Puzzle problem and report the same results as in Table~\ref{tab:searchExpansions} considering only the puzzles \emph{solvable by HIPS}. The results suggest that the complete search can speed up finding a correct solution.

In Fig.~\ref{fig:baseline_ratio}, we analyze the effect of the hyperparameter $\varepsilon$ on the number of solved puzzles. We report the ratio of the number of unsolved puzzles by HIPS-$\varepsilon$ to the number of unsolved puzzles by HIPS as a function of $N$, the number of node expansions. We observe that 
HIPS-$\varepsilon$ outperforms HIPS in all environments except TSP, and extensive hyperparameter tuning to select a good value of $\varepsilon$ is rarely necessary. The plots also support our hypothesis about a small value of $\varepsilon$ being often better despite the worse worst-case performance, as $\varepsilon = 10 ^{-3}$ is often superior to $\varepsilon = 0.1$. The augmented search strategy suggested by \citet{zawalski2022fast} (which we denote as $\varepsilon\to0$) is sometimes but not always the best strategy. We analyze the impact of the value of $\varepsilon$ in Appendix~\ref{app:epsilon}.

\begin{figure}[t]
    \centering
    \begin{subfigure}[b]{0.38\textwidth}
        \centering
        \includegraphics[width=\textwidth,trim={12mm 7mm 16mm 7mm},clip]{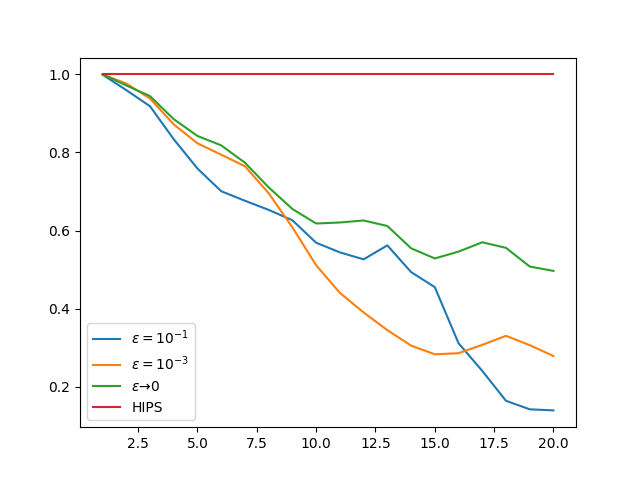}
        \label{fig:baseline_ratio_bw}
    \end{subfigure}
    \begin{subfigure}[b]{0.38\textwidth}
        \centering
        \includegraphics[width=\textwidth,trim={12mm 7mm 16mm 7mm},clip]{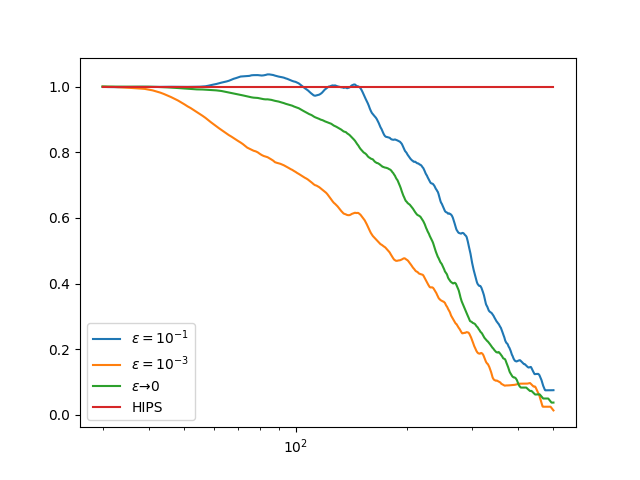}
        \label{fig:baseline_ratio_stp}
    \end{subfigure}
    \\
    \begin{subfigure}[b]{0.38\textwidth}
        \centering
        \includegraphics[width=\textwidth,trim={12mm 7mm 16mm 7mm},clip]{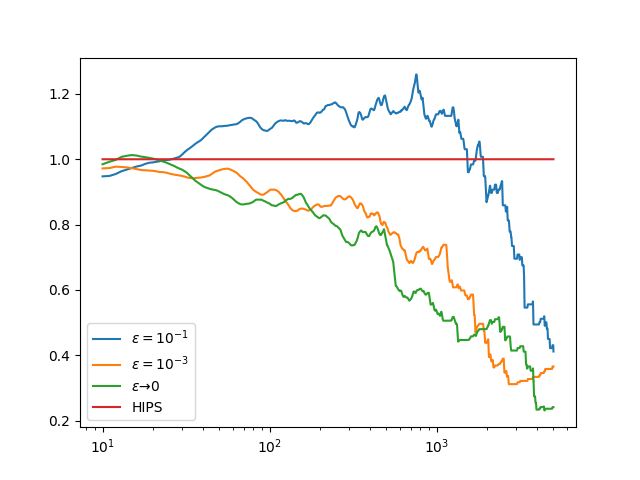}
        \label{fig:baseline_ratio_sokoban}
    \end{subfigure}
    \begin{subfigure}[b]{0.38\textwidth}
        \centering
        \includegraphics[width=\textwidth,trim={12mm 7mm 16mm 7mm},clip]{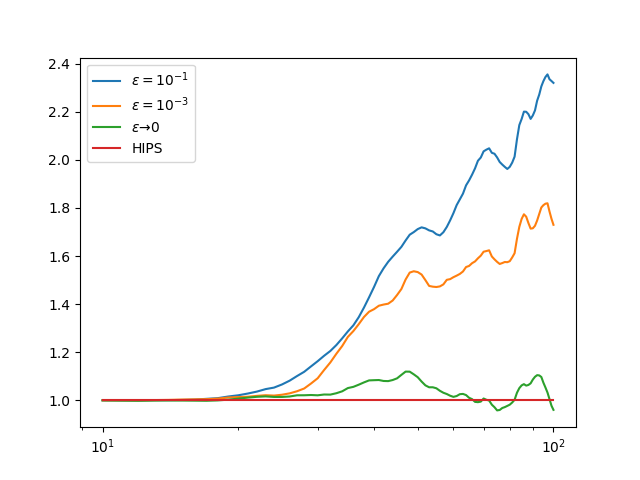}
        \label{fig:baseline_ratio_tsp}
    \end{subfigure}
    \caption{The ratio of the number of unsolved puzzles to the number of unsolved puzzles by HIPS as a function of the number of node expansions $N$ (x-axis). Values below 1 indicate that the complete search is superior to the high-level search.
    HIPS-$\varepsilon$ outperforms HIPS in every environment except TSP, where high-level actions are sufficient for solving every problem instance.}
    \label{fig:baseline_ratio}
\end{figure}

\paragraph{Generalization to Out-of-Distribution Puzzles}

In the following experiment, we demonstrate one of the most attractive properties of complete search: its ability to enhance the high-level search policy to transfer to new (potentially more complex) search problems.
We evaluate HIPS-$\varepsilon$ on Box-World in which the following changes have been made in comparison to the trajectories in $\mathcal{D}$: the number of distractor keys has been increased from three to four, and the colors of distractors changed such that there are longer distractor trajectories that lead into dead-end states, making it more difficult to detect distractor keys. After these changes to the environment, the share of puzzles HIPS can solve drops from over 99\% to approximately 66\% (see Fig.~\ref{fig:bw_difficult}). However, the 100\% solution rate can be retained by performing complete search. Note here that higher values of $\varepsilon$ seem superior to lower values when the generator is worse, which is aligned with our analysis in Section~\ref{sec:analysis}.

\begin{figure}[h]
    \centering
    \begin{subfigure}[b]{0.4\textwidth}
        \centering
        \includegraphics[width=\textwidth,trim={5mm 7mm 16mm 14mm},clip]{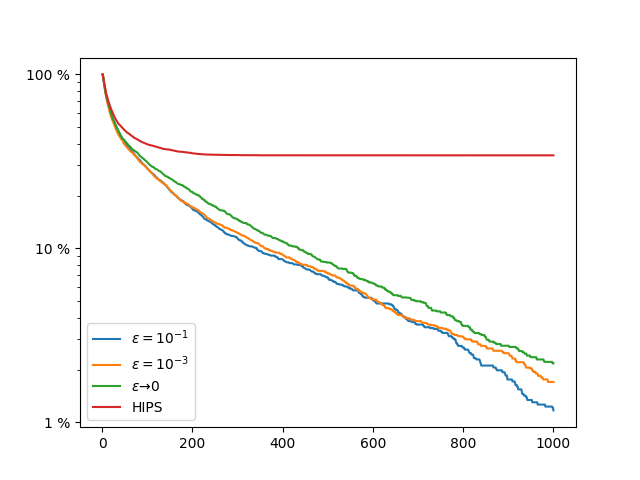}
        \caption{Percentage of unsolved puzzles}
        \label{fig:all_results_bw_difficult}
    \end{subfigure}
    \begin{subfigure}[b]{0.4\textwidth}
        \centering
        \includegraphics[width=\textwidth,trim={5mm 7mm 16mm 14mm},clip]{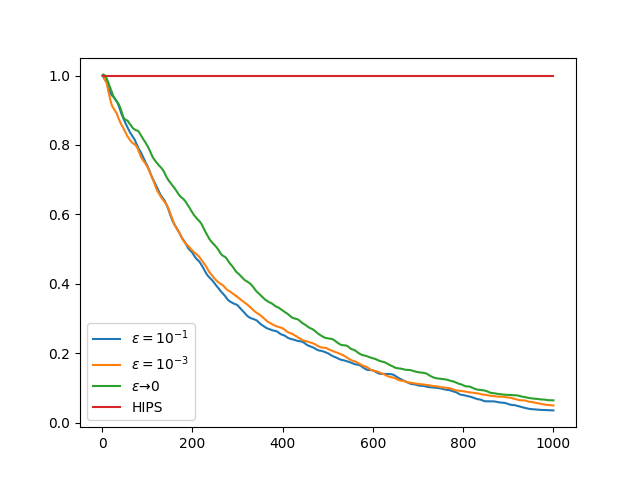}
        \caption{Ratio of unsolved puzzles (cf. HIPS)}
        \label{fig:baseline_ratio_bw_difficult}
    \end{subfigure}
    \caption{The percentage of puzzles remaining unsolved (y-axis) depending on the number of node expansions (x-axis) for complete search with different values of $\varepsilon$ and only high-level search (left), and the ratio of unsolved puzzles in comparison with HIPS depending on the number of node expansions (right), when the methods have been evaluated on an out-of-distribution variant of Box-World.}
    \label{fig:bw_difficult}
\end{figure}

\begin{table*}
\caption{The success rates (\%) of HIPS-$\varepsilon$ with different PHS evaluation functions.
For all environments, we use $\varepsilon = 10^{-3}$. Search without heuristic fails in TSP due to running out of memory.
}
\centering
\begin{tabular}{l|rrr|rrr|rr|rrr}
\multicolumn{1}{c|}{} & \multicolumn{3}{c|}{\textit{Sokoban}} & \multicolumn{3}{c|}{\textit{Sliding Tile Puzzle}}  & \multicolumn{2}{c|}{\textit{Box-World}} &  \multicolumn{3}{c}{\textit{TSP}}\\
\midrule
$N$ & 50 & 100 & 200 & 50 & 100 & 200 & 5 & 10 & 20 & 50 & 100 \\
\midrule
$\hat{\varphi}_{\hat{h}}$ (ours) & \textbf{82.5} & \textbf{88.8} & \textbf{92.9} & \textbf{16.3} & \textbf{68.6} & \textbf{93.7} & \textbf{89.0} & \textbf{99.1} & \textbf{18.3} & \textbf{82.4} & \textbf{96.1} \\
$\varphi_\text{LevinTS}$ & 66.4 & 80.1 & 88.8 & 0.0 & 0.0 & 0.5 & 27.4 & 66.3 & N/A & N/A & N/A \\
$\varphi_\text{depth}$ & 71.8 & 83.6 & 91.5 & 0.0 & 0.2 & 1.2 & 37.3 & 75.8 & 0.0 & 0.0 & 0.0 \\
$\varphi_\text{dist}$ & 68.6 & 81.3 & 88.7 & 0.0 & 0.1 & 0.7 & 30.4 & 70.0 & 0.0 & 0.0 & 0.0 \\
\end{tabular}
\label{tab:heuristic}
\end{table*}

\paragraph{Role of the Heuristic Factor}

\citet{orseau2021policy} observed that for pure low-level search, using a learned heuristic in addition to a search policy $\pi$ can lead to significantly improved search performance. We evaluated the performance of HIPS-$\varepsilon$ with different heuristic factors and corresponding PHS evaluation functions.
We compared the heuristic factor and evaluation function (\ref{eq:evaluation}) proposed by us, the Levin-TS inspired evaluation function
$\varphi_\text{LevinTS} = g(n)/\pi(n)$ that does not use a heuristic, and
naive A*-inspired evaluation functions \citep{hart1968formal} which correspond to the variant $\text{PHS}_h$ in \citep{orseau2021policy}:
\begin{equation*}
\varphi_\text{depth} = (g(n) + h(n))/\pi(n),
\qquad
\varphi_\text{dist} = (\dist(n) + h(n))/\pi(n).
\end{equation*}
The results in Table~\ref{tab:heuristic} show that a heuristic function is crucial for reaching a competitive search performance in most environments. In particular, on TSP, where it was observed that training a good VQVAE prior (which we use as $\pi_\text{SG}$) is very difficult \citep{kujanpaa2023hierarchical}, the search heavily relies on the learned heuristic, and the search fails due to running out of memory. Furthermore, naively using A*-inspired evaluation functions fails to be competitive.

\section{Conclusion}

Subgoal search algorithms can effectively address complex reasoning problems that require long-term planning. However, these algorithms may fail to find a solution to solvable problems.
We have presented and analyzed HIPS-$\varepsilon$, an extension to a recently proposed subgoal search algorithm HIPS. We achieve this by augmenting the subgoal-level search with low-level actions. As a result, HIPS-$\varepsilon$ is guaranteed to discover a solution if a solution exists and it has access to an accurate environment model. HIPS-$\varepsilon$ outperforms HIPS and other baseline methods in terms of search loss and solution rate. Furthermore, the results demonstrate that augmenting the search with low-level actions can improve the planning performance even if the subgoal search could solve the puzzle without them, and the proposed algorithm is not hyperparameter-sensitive. HIPS-$\varepsilon$ enables using subgoal search in discrete settings where search completeness is critical.

In the future, we would like to tackle some of the limitations of our work. Our search paradigm could be applied to other subgoal search methods than HIPS and search algorithms than PHS. Combining HIPS-$\varepsilon$ with recursive search methods that split the problem into smaller segments could enable scaling to even longer problem horizons. Our analysis of the complete subgoal search assumes deterministic environments and access to transition dynamics. HIPS-$\varepsilon$ can also function without access to the transition dynamics, but the completeness guarantee is lost. Many real-world problems are partially or fully continuous. There are also problems with infinite discrete action spaces that HIPS-$\varepsilon$ is not particularly suited for, and we assume discrete state representations. HIPS-$\varepsilon$ also assumes that a solution exists. Modifying HIPS-$\varepsilon$ for the continuous setting could enable solving real-world robotics tasks. HIPS-$\varepsilon$ also showed potential at transfer learning by efficiently solving more complex Box-World tasks than those seen during training. 
Applying curriculum learning by learning the initial model offline and then learning to progressively solve harder problem instances with online fine-tuning is a promising direction for future research.

\begin{ack}
We acknowledge the computational resources provided by the Aalto Science-IT project and CSC, Finnish IT Center for Science. The work was funded by Research Council of Finland (aka Academy of Finland) within the Flagship Programme, Finnish Center for Artificial Intelligence (FCAI).
J.~Pajarinen was partly supported by Research Council of Finland (aka Academy of Finland) (345521).
\end{ack}

\FloatBarrier

\bibliographystyle{plainnat}
\bibliography{paper}

\newpage
\appendix

\section{Potential Negative Societal Impacts}

We have not trained our models with sensitive or private data, and we emphasize that our model's direct applicability to real-world decision-making concerning humans is currently limited. Nevertheless, we must remain vigilant about potential unintended uses that could have harmful implications, particularly in contexts like military or law enforcement applications or other scenarios that are difficult to anticipate. While we acknowledge that there may not be immediate negative societal impacts associated with our work on complete search and HIPS-$\varepsilon$, it is also essential to consider the potential long-term consequences. For instance, when applying HIPS-$\varepsilon$ to data containing human data or collected with humans, the issues of fairness, privacy, and sensitivity must be taken into account.

\section{Formal Problem Definition}

We adopt the formal problem definition from \citet{orseau2021policy}. Assume that there is a single-agent task $k$ that is modeled as a fully observable Markov Decision Process $\mathcal{M} = (\mathcal{S}, \mathcal{A}, P, R, \gamma, \mathcal{S}_0)$, where $\mathcal{S}$ is the state space, $\mathcal{A}$ the action space, $P$ the transition dynamics, $R$ the reward function, $\gamma$ the discount factor, and $\mathcal{S}_0$ the set of initial states. We also assume that the state space is discrete, the action space is discrete and finite, and the transition function is deterministic. The reward is equal to one when a terminal state has been reached and zero otherwise, that is, the tasks are goal-oriented.

We assume that there is a search algorithm $S$, and an associated task search loss $L_k$.  The objective is to find a solution to the task $k$ by performing tree search using the algorithm $S$. Let $\mathcal{N}$ be the set of all possible nodes in the tree, and let $\mathcal{N}_{\mathcal{G}}$ be the set of solution nodes. Our objective is to minimize the search loss $\min_{n^* \in \mathcal{N}_{\mathcal{G}}} L_k(S, n^*)$. We assume that every time the search expands a node, we incur a node loss $L(n)$, where $L(n) : \mathcal{N} \to [0, \infty]$ is a loss function defined for all nodes $n \in \mathcal{N}$, and the task search loss for node $n$, $L_k(S, n)$ is equivalent to the sum of individual losses $L(n')$ for all nodes $n'$ that have been expanded before expanding node $n$.

During learning, we adopt the imitation learning (offline) setting: the agent must learn to solve the task and minimize the search loss without interacting with the environment. Instead, there is a dataset $\mathcal{D}$ of trajectories $\tau = \{s_0, a_0, s_1, \dots, a_{T-1}, s_T\}$. The trajectories in the dataset are goal-reaching but can be highly suboptimal, that is, the expert does not reach the terminal state in the fastest way possible.

\section{Derivation of Complete Search Heuristic}
\label{app:derivation}

Given a node $n$, its set of descendants $\text{desc}_*(n)$ (all possible nodes following after $n$), 
and the set of goal nodes $\mathcal{N}_\mathcal{G}$, \citet{orseau2021policy} argue that the ideal heuristic factor is equal to
\begin{equation}
    \eta(n) = \min_{n^* \in \text{desc}_*(n) \cap \mathcal{N}_\mathcal{G}} \frac{g(n^*)/\pi(n^*)}{g(n)/\pi(n)}.
    \label{eq:optimal_heuristic}
\end{equation}
Then, $g(n^*)$ can be approximated as the sum of the path loss $g(n)$ to the current node $n$, and the predicted additional loss from the current node $n$ to the closest descendant target node $n^*$. Assuming a path from the root node $n_0$ to $n$, $(n_0, n_1, \dots, n)$, and that the algorithm incurs a loss of $L(n_i)$ for expanding any node $n_i$, the path loss $g(n)$
of $n$ is $\sum_{n_i \in (n_0, \dots, n)} L(n_i)$, and the additional loss from the node $n$ to the target $n^*$ should be equal to $\sum_{n_i \in (n', \dots, n^*)} L(n_i)$, where $n'$ is the child node of $n$ on the path to $n^*$. Then, $g(n^*) = g(n) + \sum_{n_i \in (n', \dots, n^*)} L(n_i)$. 

We know the value of $g(n)$ but we do not know the path $(n, n', \dots, n^*)$ before executing the search, so we need to approximate $\sum_{n_i \in (n', \dots, n^*)} L(n_i)$ using the learned heuristic $h(n)$ trained to predict the number of low-level steps from $n$ to $n^*$ \citep{orseau2021policy}. However, $L(n_i)$ may not directly depend on the number of low-level steps between $n_i$ and its parent. In particular, we have defined that for HIPS-$\varepsilon$, $\forall n' \in \mathcal{N} : L(n') = 1$, where $\mathcal{N}$ is the complete search space. Therefore, we would need to know the predicted length of the path $(n, \dots, n^*)$ in terms of the number of nodes, not low-level actions, which is what $h(n)$ predicts. To circumvent this, we assume that the average number of low-level actions between nodes is approximately constant, and we propose scaling the heuristic $h(n)$ by dividing it by the average number of low-level actions per node on the path $(n_0, \dots, n)$. This results in the scaled heuristic $\hat{h}(n) = h(n) / \frac{\dist(n)}{g(n)} = \frac{h(n) g(n)}{\dist(n)}$, where $\dist(n)$ is the number of low-level actions on the path from $n_0$ to $n$. Hence, the approximation of $g(n^*)$ becomes
\begin{equation}
g(n^*) = g(n) + \hat{h}(n) = g(n) + \frac{h(n) g(n)}{\dist(n)} = g(n)\bigg (1 + \frac{h(n)}{\dist(n)} \bigg).
\end{equation}
Note that this scaled heuristic and the corresponding approximation for $g(n^*)$ are valid for losses $L(n)$ other than the constant one as long as $g(n)$ and $\dist(n)$ are positively correlated.

\citet{orseau2021policy} proposed to approximate the probability $\pi(n^*)$ as $\pi(n^*) = [\pi(n)^{1/g(n)}]^{g(n) + h(n)}$. This can be interpreted as first taking the average conditional probability $p = [\pi(n)^{1/g(n)}]$ along the path from the root to $n$, and then scaling it to the full length $g(n) + h(n)$ as $p^{g(n) + h(n)}$. In our case, $h(n)$ estimates the distance to the terminal node in terms of low-level actions. Therefore, we use our scaled heuristic $\hat{h}(n)$ to get the estimate in terms of search nodes instead. This leads to a new approximation for $\pi(n^*)$:
\begin{align}
    \pi(n^*) &= [\pi(n)^{1/g(n)}]^{g(n)+\hat{h}(n)} \\
    &= [\pi(n)^{1/g(n)}]^{g(n)+\frac{h(n) g(n)}{\dist(n)}} \\
    &= \pi(n)^{1 + \frac{h(n)}{\dist(n)}}
\end{align}
Then, we insert the approximations for $g(n^*)$ and $\pi(n^*)$ into (\ref{eq:optimal_heuristic}), with $h(n)$ predicting the distance to the closest terminal node and thus allowing us to drop the $\min$, similarly as \citet{orseau2021policy}.
\begin{align}
    \eta(n) &= \min_{n^* \in \text{desc}_*(n) \cap \mathcal{N}_\mathcal{G}} \frac{g(n^*)/\pi(n^*)}{g(n)/\pi(n)} \\
    &= \min_{n^* \in \text{desc}_*(n) \cap \mathcal{N}_\mathcal{G}} \frac{g(n^*)}{g(n)} \cdot \frac{\pi(n)}{\pi(n^*)} \\
    &= \bigg(1 + \frac{h(n)}{\dist(n)}\bigg) \cdot \frac{\pi(n)}{\pi(n) ^{1 + \frac{h(n)}{\dist(n)}}}\\
    &= \frac{1 + h(n)/\dist(n)}{\pi^{h(n) / \dist(n)}}, \label{eq:heuristic_eta}
\end{align}
which we denote by $\hat{\eta}_{\hat{h}}$. 
Finally, we insert~(\ref{eq:heuristic_eta}) into $\varphi(n) = \eta(n)\frac{g(n)}{\pi(n)}$ \citep{orseau2021policy} yielding
\begin{equation}
    \varphi(n) = \frac{g(n)\cdot(1 + h(n)/\dist(n))}{\pi(n)^{1 + h(n)/\dist(n)}},
\end{equation}
which we denote by $\hat{\varphi}_{\hat{h}}(n)$ and use as the evaluation function of HIPS-$\varepsilon$.

\section{Experiment Details}

The code for HIPS-$\varepsilon$ can be found on GitHub\footnote{https://github.com/kallekku/HIPS}. We implemented the BC policy $\pi_\text{BC}(a | s_{j-1})$ by adding a new head to the conditional VQVAE prior $p(e | s)$, which acts as the high-level policy $\pi_\text{SG}$ and that has been implemented as a ResNet-based CNN. We used the hyperparameters and networks from \citep{kujanpaa2023hierarchical} for the other components (also shown in Tables~\ref{tab:genhyperparameters}~and~\ref{tab:hyperparameters}). The new hyperparameter of HIPS-$\varepsilon$ is $\varepsilon$. For all four environments, we evaluated the values $10^{-1}, 10^{-3}, 10^{-5}$, and $\varepsilon \to 0$, and chose the best-performing one to be included in the results. For Sokoban and TSP that is $\varepsilon \to 0$, for Box-World it is $\varepsilon = 10^{-3}$, and for Sliding Tile Puzzle $\varepsilon = 10^{-5}$ unless specified otherwise (see also Table~\ref{tab:full_searchExpansions}). The results for the baselines AdaSubS, kSubS, BC, CQL, DT, and HIPS with learned models were copied from \citep{kujanpaa2023hierarchical}.

We used the Sokoban implementation in Gym-Sokoban (MIT License) \citep{SchraderSokoban2018} and the Sliding Tile Puzzle implementation from \citet{orseau2021policy} (Apache License 2.0). We ran our experiments using the demonstration dataset from \citep{kujanpaa2023hierarchical}. The dataset contains 10340 trajectories in Sokoban, 5100 in Sliding Tile Puzzle, and 22,100 in Bow-World. The number of trajectories in TSP is unlimited, but they are of low quality. The datasets also contain a validation set, which we use to evaluate early stopping.

The total number of GPU hours used on this work was approximately 7,500. Approximately 15,000 hours of GPU time in total were used for exploratory experiments during the project. We used an HPC cluster with AMD MI250X GPUs for running the experiments in this paper. Each job was run using a single GPU. We used 6 CPU workers (AMD Trento) per GPU. 

\begin{table}[htbp]
\caption{General hyperparameters of our method.}
\centering
\begin{tabular}{lc}
Parameter & Value \\
\midrule
Learning rate for dynamics & $2 \cdot 10^{-4}$ \\
Learning rate for $\pi$, $d$, $V$ & $1 \cdot 10^{-3}$ \\
Learning rate for VQVAE & $2 \cdot 10^{-4}$ \\
Discount rate for REINFORCE & 0.99 \\
\end{tabular}
\label{tab:genhyperparameters}
\end{table}

\begin{table}[htbp]
\caption{Environment-specific hyperparameters of our method.}
\centering
\begin{tabular}{llcccc}
Parameter & Explanation & Sokoban & STP & Box-World & TSP \\
\midrule
$\alpha$ & Subgoal penalty & 0.1 & 0.1 & 0.1 & 0.05 \\
$\beta$ & Beta for VQVAE & 0.1 & 0.1 & 0.1 & 0 \\
$D$ & Codebook dimensionality & 128 & 128 & 128 & 64 \\
$H$ & Subgoal horizon & 10 & 10 & 10 & 50 \\
$K$ & VQVAE codebook size & 64 & 64 & 64 & 32 \\
$(N, D)$ & DRC size & -- & -- & (3, 3) & -- \\
\end{tabular}
\label{tab:hyperparameters}
\end{table}

\section{Attaining Completeness with Learned Dynamic Models}

\label{app:modelcompleteness}

In the main text, we assumed that we either have access to the environment dynamics and the number of environment steps is not a cost to be minimized or that we have no access to the environment and use a learned model. In the latter case, completeness cannot be guaranteed. However, we can assume a setting where we have access to the environment dynamics, but each step is costly. Hence, the objective is to minimize the number of environment interactions while retaining completeness. To achieve this, we learn a dynamics model and perform a search with it. If a solution is found, we validate it with the environment dynamics. Furthermore, we simulate the consequences of the low-level actions with the known environment dynamics, which allows us to guarantee search completeness and minimize the number of environment steps and search node expansions required to find the solution. The results for this modification can be found in Appendix~\ref{app:steps}.

\newpage

\section{Results for STP with GBFS and TSP with A*}
\label{app:gbfs}

In the main text, we evaluated HIPS-$\varepsilon$ with PHS* as the underlying search algorithm on all tasks and compared those numbers to the results of HIPS with PHS* as the search algorithm. However, HIPS originally used Greedy Best-First Search (GBFS) in STP and A* in TSP \citep{kujanpaa2023hierarchical}. We evaluated HIPS-$\varepsilon$ with these search algorithms on the environments. We tried two strategies: using low-level actions and subgoals equally and only using low-level actions when the high-level search was about to fail. The latter outperformed the former strategy. In TSP, high-level actions alone are sufficient for solving the task, and in STP, the greedy approach with equal use of the low-level actions suffers from the noisiness of the value function. The results are given in Tables~\ref{tab:gbfs_stp_searchExpansions}~and~\ref{tab:astar_tsp_searchExpansions}.  Using high-level actions improves the signal-to-noise ratio \citep{czechowski2021subgoal}, which was the key to good performance.

\begin{table*}[h]
\caption{The success rates (\%) after performing $N$ node expansions on Sliding Tile Puzzle with GBFS as the underlying search algorithm for each method. The uncertainty is the standard error of the mean. As we cannot control the usage of the low-level actions with $\varepsilon$, we expand low-level actions only when necessary ($\epsilon\to0$). HIPS corresponds to HIPS-env in \citep{kujanpaa2023hierarchical}.}
\centering
\begin{tabular}{l|rrrr}
\multicolumn{1}{c|}{} & \multicolumn{4}{c}{\textit{Sliding Tile Puzzle (GBFS)}}\\
\midrule
$N$ & 50 & 100 & 200 & $\infty$ \\
\midrule
HIPS & 78.6 $\pm$ 2.6 & 90.2 $\pm$ 1.8 & 91.4 $\pm$ 1.6 & 94.5 $\pm$ 1.0 \\
AdaSubS & 0.0 $\pm$ 0.0 & 0.0 $\pm$ 0.0 & 0.0 $\pm$ 0.0 & 0.0 $\pm$ 0.0 \\
kSubS & 0.7 $\pm$ 0.2 & 79.9 $\pm$ 3.1 & 89.8 $\pm$ 1.5 & 93.3 $\pm$ 0.8 \\
\midrule
HIPS-$\varepsilon$ & \textbf{83.6} $\pm$ 2.1 & \textbf{94.6} $\pm$ 1.3 & \textbf{95.8} $\pm$ 1.2 & \textbf{100.0} $\pm$ 0.0 \\
\end{tabular}

\vspace{5 mm}

\label{tab:gbfs_stp_searchExpansions}
\caption{The success rates (\%) after performing $N$ node expansions for HIPS and HIPS-$\varepsilon$ on TSP with A* as the underlying search algorithm. The uncertainty is the standard error of the mean. As we cannot control the usage of the low-level actions with $\varepsilon$, we expand the low-level actions only when necessary ($\epsilon\to0$). HIPS corresponds to HIPS-env in \citep{kujanpaa2023hierarchical}.}
\centering
\begin{tabular}{l|rrrr}
\multicolumn{1}{c|}{} & \multicolumn{4}{c}{\textit{Travelling Salesman Problem (A*)}}\\
\midrule
$N$ & 20 & 50 & 100 & $\infty$ \\
\midrule
HIPS & \textbf{54.3} $\pm$ 13.7 & \textbf{99.9} $\pm$ 0.1 & \textbf{100.0} $\pm$ 0.0 & \textbf{100.0} $\pm$ 0.0 \\
HIPS-$\varepsilon$ & 52.4 $\pm$ 13.3 & 99.8 $\pm$ 0.2 & \textbf{100.0} $\pm$ 0.0 & \textbf{100.0} $\pm$ 0.0 \\
\end{tabular}

\label{tab:astar_tsp_searchExpansions}
\end{table*}

\FloatBarrier

\clearpage

\section{Full Results with Confidence Intervals}
\label{app:completeresults}

\begin{table*}[h]
\caption{The mean success rates (\%) after performing $N$ node expansions for different subgoal search algorithms with access to environment dynamics and the standard error of the mean as the uncertainty metric. For HIPS-$\varepsilon$, we use the value of $\varepsilon$ that yields in the best performance: $\varepsilon \to 0$ for Sokoban, $\varepsilon = 10^{-5}$ for Sliding Tile Puzzle, $\varepsilon = 10^{-3}$ for Box-World, and $\varepsilon \to 0$ for Travelling Salesman Problem. HIPS corresponds to HIPS-env in \citep{kujanpaa2023hierarchical} and uses PHS* as the search algorithm in all environments.}
\centering
\begin{tabular}{l|rrrr}
\multicolumn{1}{c|}{} & \multicolumn{4}{c}{\textit{Sokoban}} \\
\midrule
$N$ & 50 & 100 & 200 & $\infty$ \\
\midrule
PHS* (low-level search) & 0.2 $\pm$ 0.1 & 2.4 $\pm$ 0.4 & 16.2 $\pm$ 1.3 & \textbf{100} $\pm$ 0.0\\
HIPS (high-level search) & 82.0 $\pm$ 0.7 & 87.8 $\pm$ 0.5 & 91.6 $\pm$ 0.4 & 97.9 $\pm$ 0.4\\
AdaSubS (high-level search) & 76.4 $\pm$ 0.5 & 82.2 $\pm$ 0.5 & 85.7 $\pm$ 0.6 & 91.3 $\pm$ 0.5 \\
kSubS (high-level search) & 69.1 $\pm$ 2.2 & 73.1 $\pm$ 2.2 & 76.3 $\pm$ 1.9 & 90.5 $\pm$ 1.0 \\
\midrule
HIPS-$\varepsilon$ (complete search) & \textbf{84.3} $\pm$ 1.1 & \textbf{89.5} $\pm$ 1.1 & \textbf{93.1} $\pm$ 0.6 & \textbf{100} $\pm$ 0.0 
\\[3mm]
\multicolumn{1}{c|}{} & \multicolumn{4}{c}{\textit{Sliding Tile Puzzle}} \\
\midrule
$N$ & 50 & 100 & 200 & $\infty$ \\
\midrule
PHS* (low-level search) & 0.0 $\pm$ 0.0 & 0.0 $\pm$ 0.0 & 0.0 $\pm$ 0.0 & \textbf{100} $\pm$ 0.0 \\
HIPS (high-level search) & 8.7 $\pm$ 1.2 & 56.8 $\pm$ 4.5 & 86.3 $\pm$ 2.1 & 95.0 $\pm$ 0.8 \\
AdaSubS (high-level search) & 0.0 $\pm$ 0.0 & 0.0 $\pm$ 0.0 & 0.0 $\pm$ 0.0 & 0.0 $\pm$ 0.0 \\
kSubS (high-level search) & 0.7 $\pm$ 0.2 & \textbf{79.9} $\pm$ 3.1 & 89.8 $\pm$ 1.5 & 93.3 $\pm$ 0.8 \\
\midrule
HIPS-$\varepsilon$ (complete search) & \textbf{18.5} $\pm$ 1.9 & 69.5 $\pm$ 3.9 & \textbf{93.8} $\pm$ 1.7 & \textbf{100} $\pm$ 0.0
\\[3mm]
\multicolumn{1}{c|}{} & \multicolumn{4}{c}{\textit{Box-World}} \\
\midrule
$N$ & 5 & 10 & 30 & $\infty$ \\
\midrule
PHS* (low-level search) & 0.0 $\pm$ 0.0 & 0.1 $\pm$ 0.1 & 2.2 $\pm$ 0.5 & \textbf{100} $\pm$ 0.0 \\
HIPS (high-level search) & 86.3 $\pm$ 0.7 & 97.9 $\pm$ 0.3 & 99.9 $\pm$ 0.0 & 99.9 $\pm$ 0.0 \\
\midrule
HIPS-$\varepsilon$ (complete search) & \textbf{89.7} $\pm$ 0.6 & \textbf{98.9} $\pm$ 0.2 & \textbf{100} $\pm$ 0.0 & \textbf{100} $\pm$ 0.0
\\[3mm]
\multicolumn{1}{c|}{} & \multicolumn{4}{c}{\textit{Travelling Salesman Problem}}\\
\midrule
$N$ & 20 & 50 & 100 & $\infty$ \\
\midrule
PHS* (low-level search) & 0.0 $\pm$ 0.0 & 0.0 $\pm$ 0.0 & 0.0 $\pm$ 0.0 & \textbf{100} $\pm$ 0.0 \\
HIPS (high-level search) & \textbf{19.6} $\pm$ 6.0 & \textbf{88.1} $\pm$ 4.1 & 97.7 $\pm$ 1.1 & \textbf{100} $\pm$ 0.0 \\
AdaSubS (high-level search) & 0.0 $\pm$ 0.0 & 0.0 $\pm$ 0.0 & 0.6 $\pm$ 0.3 & 21.2 $\pm$ 0.9 \\
kSubS (high-level search) & 0.0 $\pm$ 0.0 & 1.5 $\pm$ 0.6 & 40.4 $\pm$ 9.1 & 87.9 $\pm$ 3.1 \\
\midrule
HIPS-$\varepsilon$ (complete search) & 17.9 $\pm$ 5.8 & 87.4 $\pm$ 4.1 & \textbf{97.9} $\pm$ 1.2 & \textbf{100} $\pm$ 0.0\\
\end{tabular}
\label{tab:full_searchExpansions}
\end{table*}

\begin{table*}
\caption{The success rates (\%) of different algorithms without access to environment dynamics in Sokoban, Sliding Tile Puzzle, and TSP with the standard errors of the mean as the uncertainty metric. HIPS-$\varepsilon$ outperforms the baselines and can solve 100\% of the puzzles when the environment dynamics are easy to learn, but when they are more difficult, occasional failures cannot be excluded.}
\centering
\begin{tabular}{l|ccccc}
 & HIPS & HIPS-$\varepsilon$ & BC & CQL & DT \\
\midrule
Sokoban & 97.5 $\pm$ 0.6 & \textbf{100.0} $\pm$ 0.0 & 18.7 $\pm$ 0.7 & 3.3 $\pm$ 0.4 & 36.7 $\pm$ 1.2 \\
Sliding Tile Puzzle & 94.7 $\pm$ 1.0 & \textbf{100.0} $\pm$ 0.0 & 82.5 $\pm$ 2.2 & 11.7 $\pm$ 3.3 & 0.0 $\pm$ 0.0 \\
Travelling Salesman & 99.9 $\pm$ 0.1 & \textbf{100} $\pm$ 0.0 & 28.8 $\pm$ 8.5 & 33.6 $\pm$ 2.6 & 0.0 $\pm$ 0.0 \\
\end{tabular}
\label{tab:full_model_solved}
\end{table*}

\begin{table*}[t]
\caption{The success rates (\%) after $N$ node expansions in Sliding Tile Puzzle computed only on problem instances \emph{solvable by HIPS}.
$\left<N\right>$ is the mean number of expansions needed to find a solution. The uncertainty metric is the standard error of the mean. We used $\varepsilon = 10^{-5}$.}
\centering
\begin{tabular}{l|rrrrr|r}
$N$ & 50 & 75 & 100 & 200 & 500 & $\left<N\right>$ \\
\midrule
HIPS & 9.2 $\pm$ 1.3 & 38.2 $\pm$ 3.6 & 59.8 $\pm$ 4.7 & 90.8 $\pm$ 2.1 & 99.7 $\pm$ 0.2 & 108.9 $\pm$ 6.9 \\
HIPS-$\varepsilon$ & \textbf{17.3} $\pm$ 1.6  & \textbf{48.4} $\pm$ 4.3 & \textbf{68.2} $\pm$ 4.0 & \textbf{93.5} $\pm$ 1.7 & \textbf{99.9} $\pm$ 0.1 & \textbf{95.6} $\pm$ 5.7 \\
\end{tabular}
\label{tab:full_stp_solved}
\end{table*}

\begin{figure}[t]
    \centering
    \begin{subfigure}[b]{0.49\textwidth}
        \centering
        \includegraphics[width=\textwidth,trim={5mm 7mm 16mm 7mm},clip]{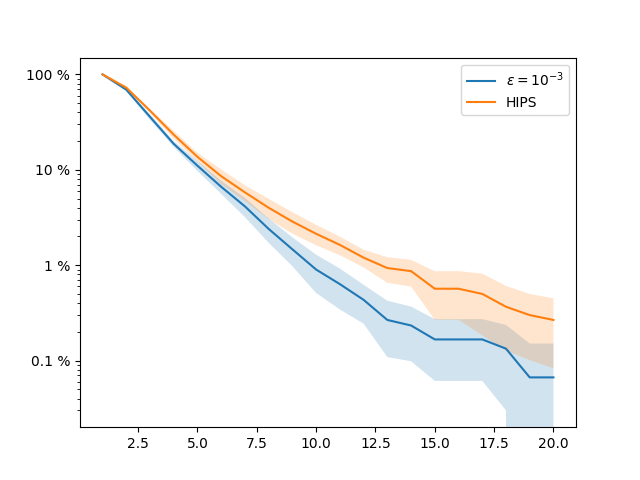}
        \label{fig:all_results_bw}
    \end{subfigure}
    \hfill
    \begin{subfigure}[b]{0.49\textwidth}
        \centering
        \includegraphics[width=\textwidth,trim={5mm 7mm 16mm 7mm},clip]{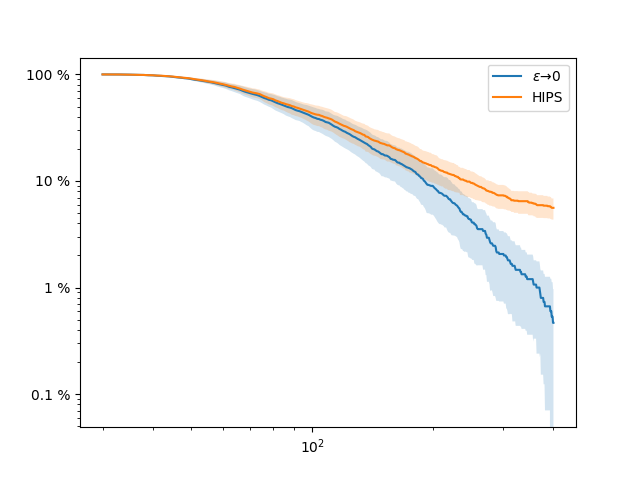}
        \label{fig:all_results_stp}
    \end{subfigure}
    \\
    \begin{subfigure}[b]{0.49\textwidth}
        \centering
        \includegraphics[width=\textwidth,trim={5mm 7mm 16mm 7mm},clip]{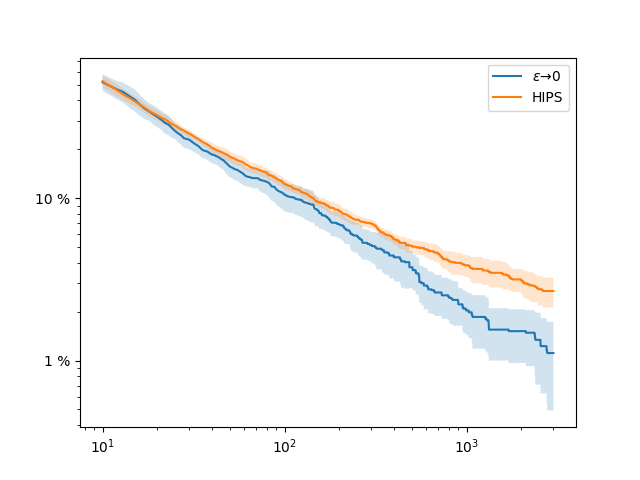}
        \label{fig:all_results_sokoban}
    \end{subfigure}
    \hfill
    \begin{subfigure}[b]{0.49\textwidth}
        \centering
        \includegraphics[width=\textwidth,trim={5mm 7mm 16mm 7mm},clip]{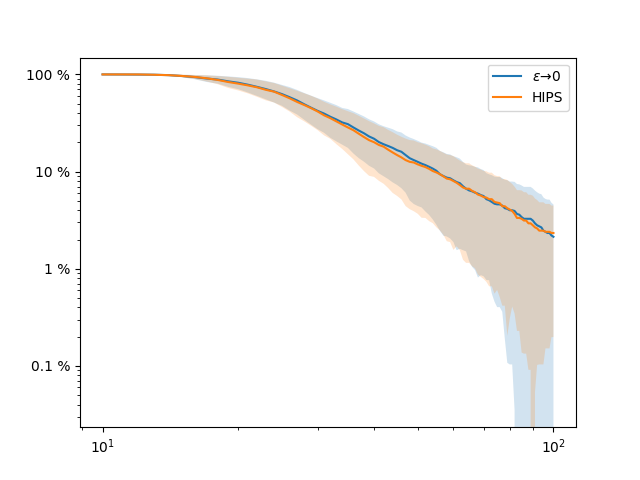}
        \label{fig:all_results_tsp}
    \end{subfigure}
    \caption{The percentage of puzzles remaining unsolved (y-axis) depending on the number of node expansions (x-axis) for complete search with the best values of $\varepsilon$. The shaded area is two standard errors (see also Table~\ref{tab:full_searchExpansions}). The differences between HIPS-$\varepsilon$ and HIPS are statistically significant except for TSP, where $\epsilon\to0$ and HIPS are equal since the low-level actions are never used in practice.}
    \label{fig:all_results}
\end{figure}

\begin{figure}[b]
    \centering
    \begin{subfigure}[b]{0.49\textwidth}
        \centering
        \includegraphics[width=\textwidth,trim={5mm 7mm 16mm 7mm},clip]{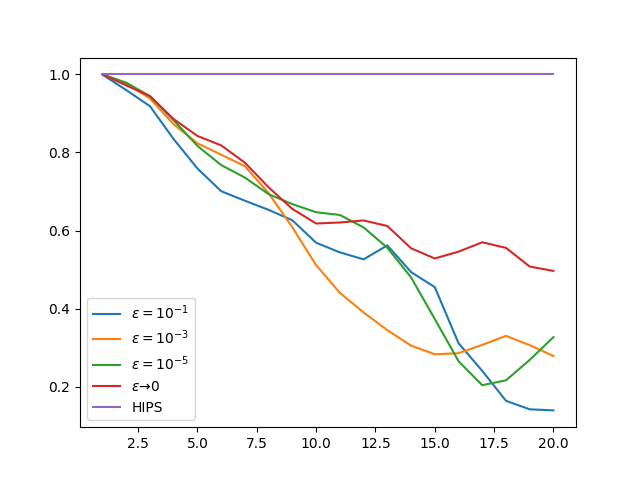}
        \label{fig:baseline_ratio_complete_bw}
    \end{subfigure}
    \hfill
    \begin{subfigure}[b]{0.49\textwidth}
        \centering
        \includegraphics[width=\textwidth,trim={5mm 7mm 16mm 7mm},clip]{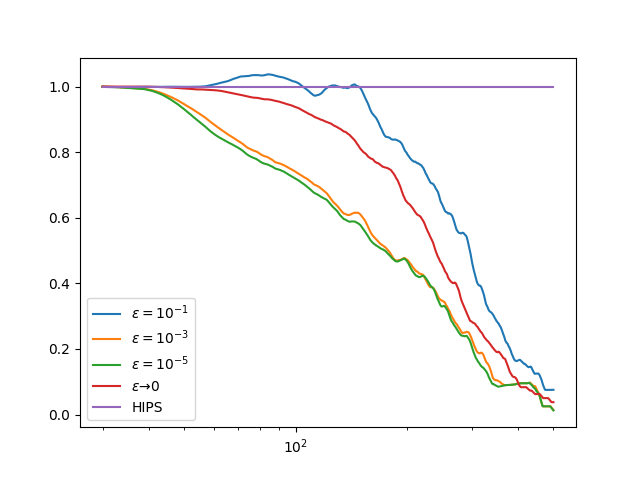}
        \label{fig:baseline_ratio_complete_stp}
    \end{subfigure}
    \\
    \begin{subfigure}[b]{0.49\textwidth}
        \centering
        \includegraphics[width=\textwidth,trim={5mm 7mm 16mm 7mm},clip]{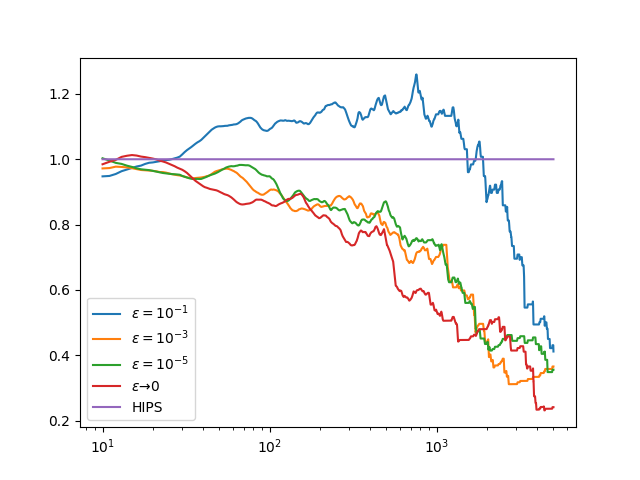}
        \label{fig:baseline_ratio_complete_sokoban}
    \end{subfigure}
    \hfill
    \begin{subfigure}[b]{0.49\textwidth}
        \centering
        \includegraphics[width=\textwidth,trim={5mm 7mm 16mm 7mm},clip]{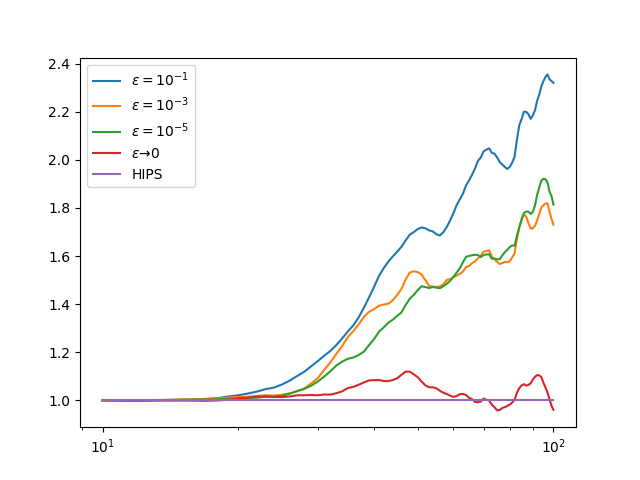}
        \label{fig:baseline_ratio_complete_tsp}
    \end{subfigure}
    \caption{The ratio of the number of unsolved puzzles to the number of unsolved puzzles by HIPS as a function of the number of node expansions $N$ (x-axis) with multiple values of $\varepsilon$. Values below 1 indicate that the complete search is superior to the high-level search. HIPS-$\varepsilon$ outperforms HIPS in every environment except TSP, where high-level actions are sufficient for solving every problem instance.}
    \label{fig:baseline_ratios_complete}
\end{figure}

\begin{table*}[t]
\caption{The mean success rates (\%) after performing $N$ node expansions for low-level PHS*, high-level HIPS, and HIPS-$\varepsilon$ with access to environment dynamics and the standard error of the mean as the uncertainty metric. The algorithms have been evaluated on a more difficult variant of Box-World without any tuning to the new environment. HIPS corresponds to HIPS-env in \citep{kujanpaa2023hierarchical}.}
\centering
\begin{tabular}{l|rrrrr}
\multicolumn{1}{c|}{} & \multicolumn{5}{c}{\textit{Box-World (OoD)}} \\
\midrule
$N$ & 10 & 50 & 200 & 1000 & $\infty$ \\
\midrule
PHS* & 0.5 $\pm$ 0.2 & 5.0 $\pm$ 0.8 & 28.6 $\pm$ 1.1 & 31.4 $\pm$ 1.0 & \textbf{100.0} $\pm$ 0.0 \\
HIPS & 12.3 $\pm$ 1.5 & 52.3 $\pm$ 3.1 & 64.8 $\pm$ 2.5 & 65.8 $\pm$ 2.4 & 65.8 $\pm$ 2.4 \\
\midrule
HIPS-$\varepsilon$, $\varepsilon = 10^{-1}$ & 27.0 $\pm$ 2.4 & 59.2 $\pm$ 2.5 & \textbf{83.1} $\pm$ 1.4 & \textbf{98.3} $\pm$ 0.3 & \textbf{100.0} $\pm$ 0.0 \\
HIPS-$\varepsilon$, $\varepsilon = 10^{-3}$ & \textbf{27.3} $\pm$ 2.7 & \textbf{60.5} $\pm$ 3.2 & 82.8 $\pm$ 1.7 & \textbf{98.3} $\pm$ 0.4 & \textbf{100.0} $\pm$ 0.0 \\
HIPS-$\varepsilon$, $\varepsilon = 10^{-5}$ & \textbf{27.3} $\pm$ 2.3 & 59.7 $\pm$ 2.7 & 80.7 $\pm$ 1.6 & 98.0 $\pm$ 0.4 & \textbf{100.0} $\pm$ 0.0 \\
HIPS-$\varepsilon$, $\varepsilon\to 0$ & 26.2 $\pm$ 2.1 & 58.3 $\pm$ 2.8 & 78.9 $\pm$ 2.0 & 97.8 $\pm$ 0.4 & \textbf{100.0} $\pm$ 0.0 
\end{tabular}

\label{tab:full_searchExpansions_bw_OOD}

\vspace{5mm}

\caption{The success rates (\%) of HIPS-$\varepsilon$ with different PHS evaluation functions including the standard errors of the mean as the uncertainty metric. For all environments, we use $\varepsilon = 10^{-3}$. Search without heuristic fails on TSP due to running out of memory.
}
\centering
\begin{tabular}{l|rrr|rrr}
\multicolumn{1}{c|}{} & \multicolumn{3}{c|}{\textit{Sokoban}} & \multicolumn{3}{c}{\textit{Sliding Tile Puzzle}} \\
\midrule
$N$ & 50 & 100 & 200 & 50 & 100 & 200 \\
\midrule
$\hat{\varphi}_{\hat{h}}$ (ours) & \textbf{82.5} $\pm$ 0.9 & \textbf{88.8} $\pm$ 0.7 & \textbf{92.9} $\pm$ 0.4 & \textbf{16.3} $\pm$ 1.9 & \textbf{68.6} $\pm$ 4.0 & \textbf{93.7} $\pm$ 1.7 \\
$\varphi_\text{LevinTS}$ & 66.4 $\pm$ 2.7 & 80.1 $\pm$ 2.1 & 88.8 $\pm$ 1.2 & 0.0 $\pm$ 0.0 & 0.0 $\pm$ 0.0 & 0.5 $\pm$ 0.5 \\
$\varphi_\text{depth}$ & 71.8 $\pm$ 2.3 & 83.6 $\pm$ 1.5 & 91.5 $\pm$ 0.8 & 0.0 $\pm$ 0.0 & 0.2 $\pm$ 0.1 & 1.2 $\pm$ 0.4 \\
$\varphi_\text{dist}$ & 68.6 $\pm$ 3.2 & 81.3 $\pm$ 2.4 & 88.7 $\pm$ 1.5 & 0.0 $\pm$ 0.0 & 0.1 $\pm$ 0.1 & 0.7 $\pm$ 0.3
\\[3mm]
\multicolumn{1}{c|}{} & \multicolumn{3}{c|}{\textit{Box-World}} & \multicolumn{3}{c}{\textit{Travelling Salesman Problem}} \\
\midrule
$N$ & 5 & 10 & 20 & 20 & 50 & 100 \\
\midrule
$\hat{\varphi}_{\hat{h}}$ (ours) & \textbf{89.0} $\pm$ 0.7 & \textbf{99.1} $\pm$ 0.2 & 99.9 $\pm$ 0.0 & \textbf{18.3} $\pm$ 5.5 & \textbf{82.4} $\pm$ 5.5 & \textbf{96.1} $\pm$ 1.5 \\
$\varphi_\text{LevinTS}$ & 27.4 $\pm$ 1.6 & 66.3 $\pm$ 1.3 & 94.0 $\pm$ 0.5 & N/A & N/A & N/A \\
$\varphi_\text{depth}$ & 37.3 $\pm$ 2.0 & 75.8 $\pm$ 1.7 & 95.9 $\pm$ 0.4 & 0.0 $\pm$ 0.0 & 0.0 $\pm$ 0.0 & 0.0 $\pm$ 0.0 \\
$\varphi_\text{dist}$ & 30.4 $\pm$ 2.2 & 70.0 $\pm$ 1.8 & 94.7 $\pm$ 0.5 & 0.0 $\pm$ 0.0 & 0.0 $\pm$ 0.0 & 0.0 $\pm$ 0.0 
\end{tabular}
\label{tab:full_heuristic}
\end{table*}

\FloatBarrier


\section{Out-of-Distribution Evaluation on Sokoban}

\begin{figure}[h]
    \centering
    \begin{subfigure}[b]{0.49\textwidth}
        \centering
        \includegraphics[width=\textwidth,trim={5mm 7mm 16mm 14mm},clip]{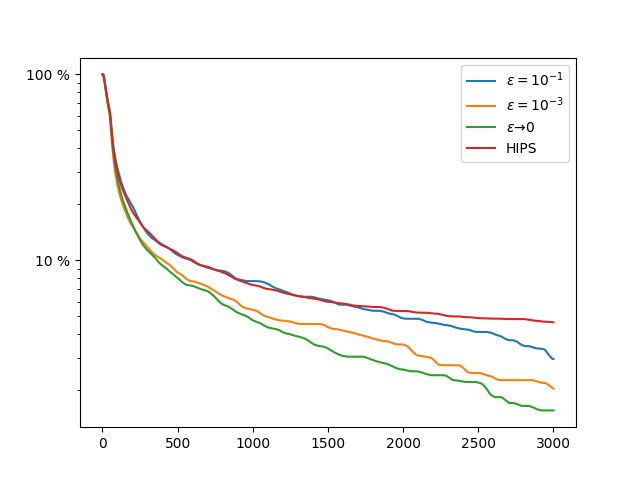}
        \caption{Percentage of unsolved puzzles}
        \label{fig:all_results_sokoban_difficult}
    \end{subfigure}
    \begin{subfigure}[b]{0.49\textwidth}
        \centering
        \includegraphics[width=\textwidth,trim={5mm 7mm 16mm 14mm},clip]{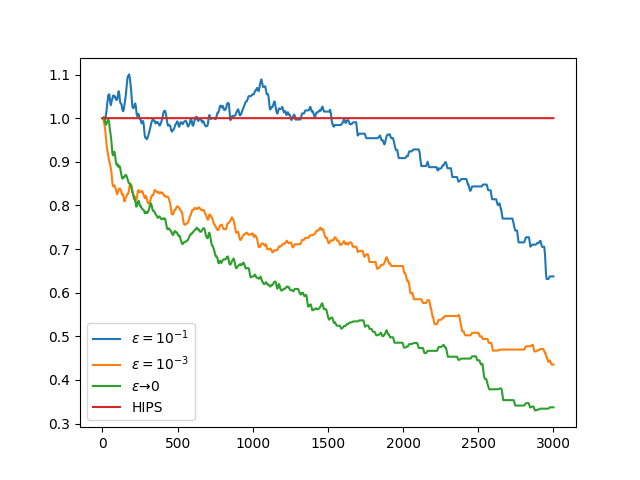}
        \caption{Ratio of unsolved puzzles (cf. HIPS)}
        \label{fig:baseline_ratio_sokoban_difficult}
    \end{subfigure}
    \caption{Evaluation of HIPS and HIPS-$\varepsilon$ on an out-of-distribution variant of Sokoban with 6 boxes. The percentage of puzzles remaining unsolved (y-axis) depending on the number of node expansions (x-axis) for complete search with different values of $\varepsilon$ and only high-level search (left), and the ratio of unsolved puzzles in comparison with HIPS depending on the number of node expansions (right).}
    \label{fig:sokoban_difficult}
\end{figure}

To further analyze how the complete subgoal search proposed by us affects the out-of-distribution generalization abilities of hierarchical planning algorithms, we evaluate HIPS-$\varepsilon$ and HIPS on Sokoban puzzles with six boxes without any adaptation to the models that have been trained on Sokoban with four boxes. The results as a function of search node expansions have been plotted in Figure~\ref{fig:sokoban_difficult}. Our results demonstrate that HIPS-$\varepsilon$ outperforms HIPS, and when the value of $\varepsilon$ is suitably selected, it can solve almost all puzzles within a reasonable number of search node expansions, whereas the performance of HIPS stagnates quite early.

\newpage

\section{Impact of $\varepsilon$ on Search}
\label{app:epsilon}

\begin{figure}[h]
    \centering
    \begin{subfigure}[b]{0.49\textwidth}
        \centering
        \includegraphics[width=\textwidth,trim={0mm 5mm 0mm 13mm},clip]{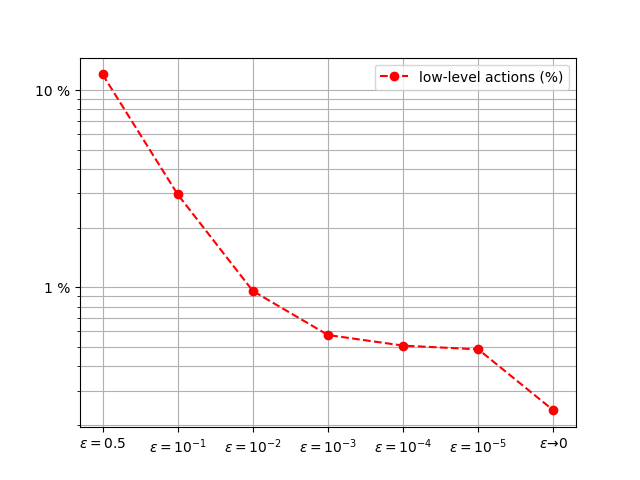}
        \caption{Share of low-level actions in STP}
        \label{fig:sensitivity-acts_stp_a}
    \end{subfigure}
    \begin{subfigure}[b]{0.49\textwidth}
        \centering
        \includegraphics[width=\textwidth,trim={0mm 5mm 0mm 13mm},clip]{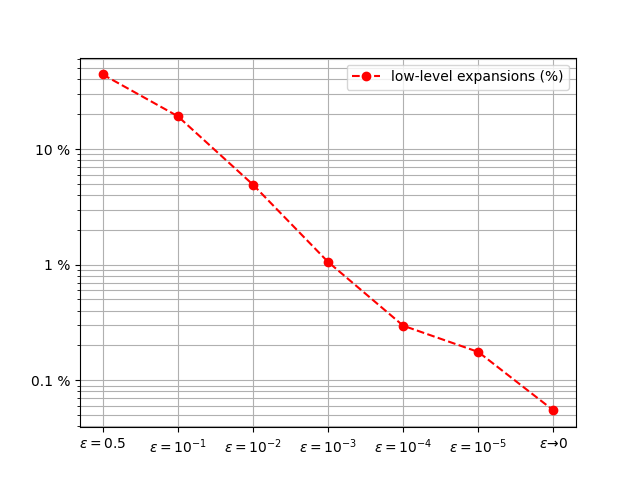}
        \caption{Share of low-level expansions in STP}
        \label{fig:sensitivity-acts_stp_b}
    \end{subfigure}
    \\
    \begin{subfigure}[b]{0.49\textwidth}
    \centering
    \includegraphics[width=\textwidth,trim={0mm 5mm 0mm 0mm},clip]{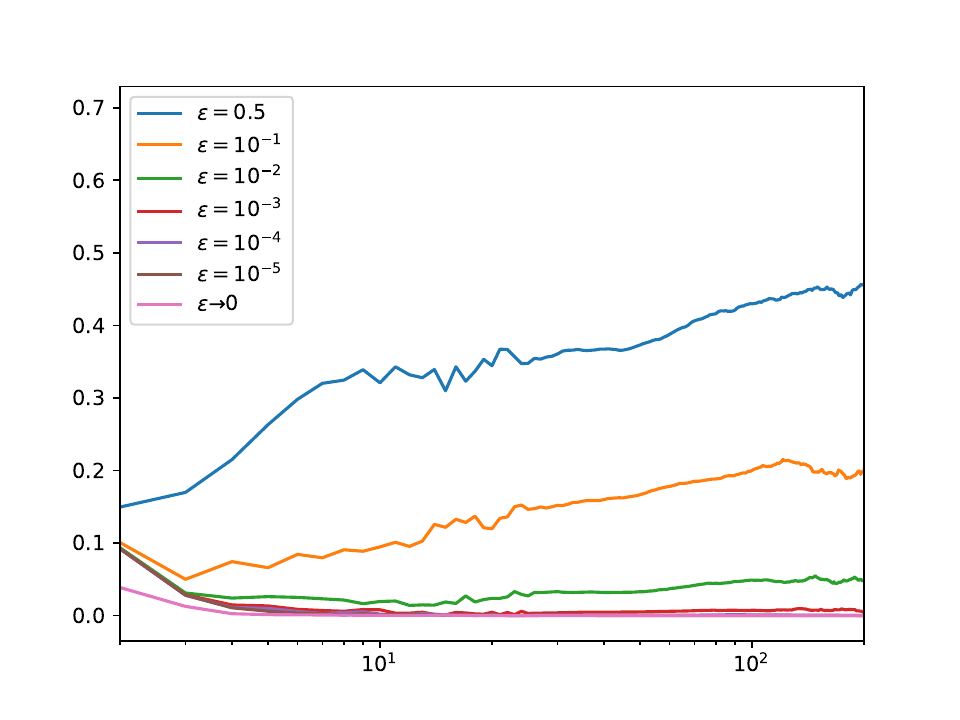}
    \caption{Share of low-level expansions w.r.t. expansions}
    \label{fig:sensitivity-acts_stp_c}
    \end{subfigure}
    \begin{subfigure}[b]{0.49\textwidth}
    \centering
    \includegraphics[width=\textwidth,trim={0mm 5mm 0mm 0mm},clip]{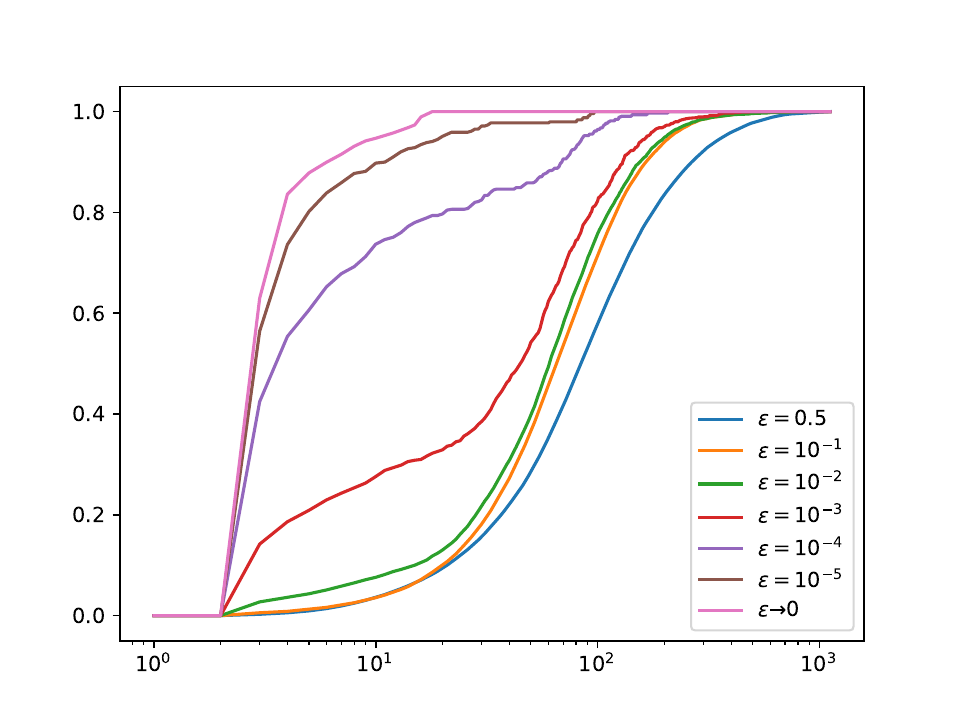}
    \caption{Cumulative distribution of low-level expansions}
    \label{fig:sensitivity-acts_stp_d}
    \end{subfigure}
    \caption{a) The percentage of low-level actions in the solutions found by HIPS-$\varepsilon$ for different values of $\varepsilon$ in Sliding Tile Puzzle. b) The percentage of expansions, where the node corresponds to a low-level action, for different values of $\varepsilon$ in STP. c) The share of low-level expansions in STP with respect to the number of nodes expanded so far in the search. d) The cumulative distribution of low-level expansions in STP for different values of $\varepsilon$.} 
    \label{fig:sensitivity-acts_stp}
\end{figure}

In Figure~\ref{fig:sensitivity-acts_stp}, we plot how changing the value of $\varepsilon$ impacts the search. In Figure~\ref{fig:sensitivity-acts_stp_a}, we plot the value of $\varepsilon$ on the x-axis and the share of low-level actions in the final solutions returned by the search on the y-axis. The greater the value of $\varepsilon$, the larger the share of low-level actions in the solutions found by the search, which is expected behavior. Similarly, Figure~\ref{fig:sensitivity-acts_stp_b} shows that when the value of $\varepsilon$ diminishes, fewer search nodes corresponding to low-level actions are expanded. In Figure~\ref{fig:sensitivity-acts_stp_c}, we plot the number of search nodes expanded on the x-axis and the probability of low-level search node expansions on the y-axis. We see that for larger values of $\varepsilon$, the relative share of low-level expansions grows as the search progresses, whereas it diminishes for smaller values of $\varepsilon$. Finally, Figure~\ref{fig:sensitivity-acts_stp_d} contains the cumulative distribution of low-level expansions as a function of the total number of search node expansions. For instance, when $\varepsilon \to 0$, almost no low-level nodes are expanded after the first ten node expansions.

\newpage

\section{Wall-Clock Time Evaluations}
\label{app:runtime}

So far, we have used the number of search node expansions as the evaluation metric for the search efficiency. However, the wall-clock duration of the search is also highly relevant for the search algorithm. Therefore, we compared HIPS-$\varepsilon$ to low-level search PHS* and the high-level search algorithms HIPS, kSubS, and AdaSubS and measured the running times. PHS*, kSubS, and AdaSubS use the environment simulators, and for HIPS and HIPS-$\varepsilon$, we measured the running time with both the environment simulators and learned models.

To make the results of HIPS and HIPS-$\varepsilon$ comparable, we improved HIPS with re-planning if the agent failed to execute the found trajectory due to model incorrectness, which explains why our results for HIPS on Box-World are greatly superior to those in \citep{kujanpaa2023hierarchical}. In some environments, the learned model was significantly faster than the environment simulator, and in other environments, the environment simulator was quicker. The results are shown in Table~\ref{tab:runningTime} and illustrated in Figure~\ref{fig:runningTime}. The results show that HIPS-$\varepsilon$ (ours) outperforms HIPS in Sokoban and STP, and is approximately equal in Box-World and TSP in terms of the running time. kSubS and AdaSubS are not competitive with the HIPS-based algorithms in our experiments, most likely due to the autoregressive generative networks. Low-level search with PHS* is clearly the best in Sliding Tile Puzzle, approximately equal in Box-World, and clearly outperformed by HIPS-$\varepsilon$ in Sokoban and TSP.

\begin{figure}[h]
    \centering
    \begin{subfigure}[b]{0.49\textwidth}
        \centering
        \includegraphics[width=\textwidth,trim={5mm 0mm 16mm 7mm},clip]{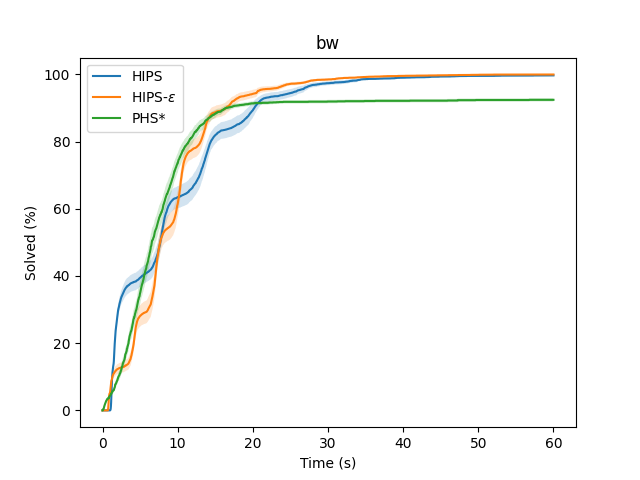}
        \label{fig:running_time_bw}
    \end{subfigure}
    \hfill
    \begin{subfigure}[b]{0.49\textwidth}
        \centering
        \includegraphics[width=\textwidth,trim={5mm 0mm 16mm 7mm},clip]{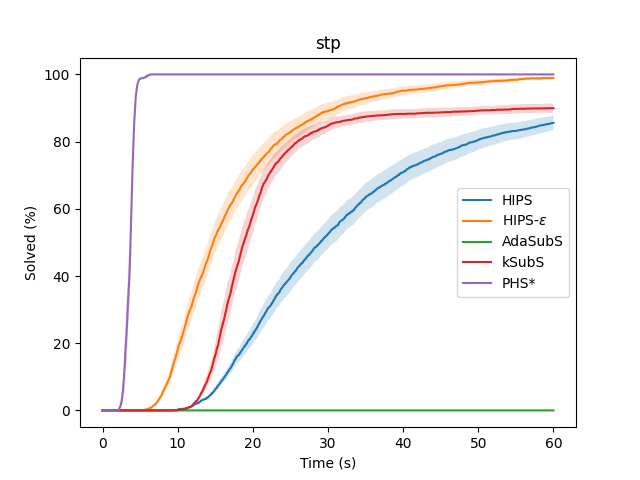}
        \label{fig:running_time_stp}
    \end{subfigure}
    \\
    \begin{subfigure}[b]{0.49\textwidth}
        \centering
        \includegraphics[width=\textwidth,trim={5mm 0mm 16mm 7mm},clip]{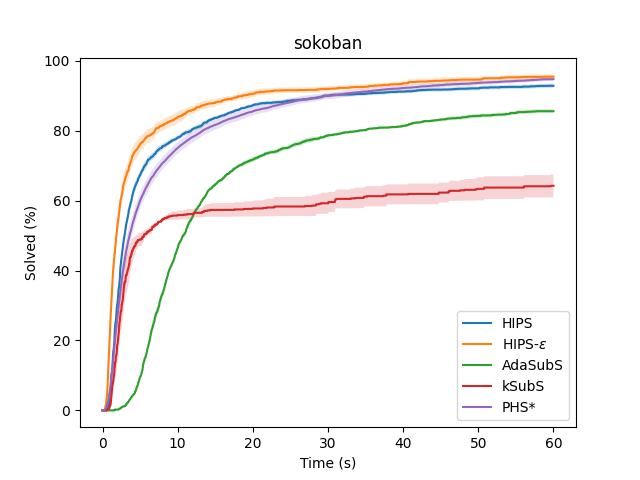}
        \label{fig:running_time_sokoban}
    \end{subfigure}
    \hfill
    \begin{subfigure}[b]{0.49\textwidth}
        \centering
        \includegraphics[width=\textwidth,trim={5mm 0mm 16mm 7mm},clip]{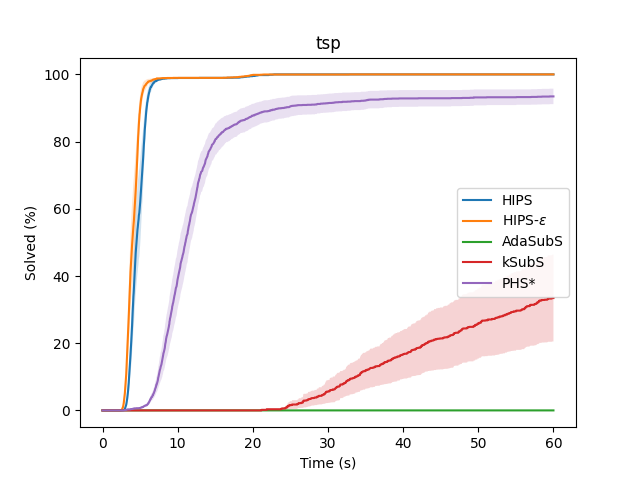}
        \label{fig:running_time_tsp}
    \end{subfigure}
    \caption{The mean percentage of puzzles solved (\%) as a function of the running time in seconds for different search methods. The shaded area is one standard error. For HIPS-$\varepsilon$, we use the same values of $\varepsilon$ as in Table~\ref{tab:full_searchExpansions}. We use PHS* as the underlying search algorithm in Sokoban, STP and Box-World, and A* in TSP.}
    \label{fig:runningTime}
\end{figure}

\begin{table*}[h]
\caption{The mean success rates (\%) after $s$ seconds of running time for different subgoal search algorithms and the standard error of the mean as the uncertainty metric. For HIPS-$\varepsilon$, we use the same values of $\varepsilon$ as in Table~\ref{tab:full_searchExpansions}. We use PHS* as the underlying search algorithm in Sokoban, STP and Box-World, and A* in TSP. }
\centering
\begin{adjustbox}{width=\textwidth}
\begin{tabular}{l|rrrrrr}
\multicolumn{1}{c|}{} & \multicolumn{6}{c}{\textit{Sokoban}} \\
\midrule
$s$ & 1 & 2 & 5 & 10 & 20 & 60 \\
\midrule
PHS* & 6.1 $\pm$ 1.0 & 27.3 $\pm$ 3.3 & 60.0 $\pm$ 2.3 & 75.1 $\pm$ 1.7 & 85.6 $\pm$ 1.2 & 94.8 $\pm$ 0.6 \\
HIPS & 5.0 $\pm$ 0.6 & 30.9 $\pm$ 1.8 & 67.3 $\pm$ 1.2 & 78.0 $\pm$ 0.8 & 87.3 $\pm$ 0.5 & 92.9 $\pm$ 0.6 \\
HIPS-env & 0.5 $\pm$ 0.2 & 7.8 $\pm$ 1.2 & 45.0 $\pm$ 2.4 & 66.2 $\pm$ 1.4 & 78.2 $\pm$ 0.7 & 88.8 $\pm$ 0.5 \\
AdaSubS & 0.0 $\pm$ 0.0 & 0.2 $\pm$ 0.1 & 8.6 $\pm$ 0.4 & 42.9 $\pm$ 0.8 & 65.6 $\pm$ 1.0 & 78.0 $\pm$ 1.0 \\
kSubS & 1.5 $\pm$ 0.8 & 19.1 $\pm$ 5.7 & 48.9 $\pm$ 1.9 & 55.9 $\pm$ 1.3 & 57.8 $\pm$ 2.3 & 64.3 $\pm$ 3.3 \\
\midrule
HIPS-$\varepsilon$ & \textbf{23.6} $\pm$ 2.2 & \textbf{53.4} $\pm$ 2.8 & \textbf{76.2} $\pm$ 2.3 & \textbf{83.9} $\pm$ 1.6 & \textbf{90.6} $\pm$ 1.0 & \textbf{95.5} $\pm$ 0.7 \\
HIPS-$\varepsilon$-env & 5.7 $\pm$ 0.8 & 32.7 $\pm$ 2.8 & 65.8 $\pm$ 1.8 & 78.1 $\pm$ 1.1 & 85.9 $\pm$ 0.9 & 92.6 $\pm$ 0.5
\\[3mm]
\multicolumn{1}{c|}{} & \multicolumn{6}{c}{\textit{Sliding Tile Puzzle}} \\
\midrule
$s$ & 2 & 5 & 10 & 20 & 40 & 60 \\
\midrule
PHS* & 0.0 $\pm$ 0.0 & \textbf{98.8} $\pm$ 0.2 & \textbf{100.0} $\pm$ 0.0 & \textbf{100.0} $\pm$ 0.0 & \textbf{100.0} $\pm$ 0.0 & \textbf{100.0} $\pm$ 0.0 \\
HIPS & 0.0 $\pm$ 0.0 & 0.0 $\pm$ 0.0 & 0.2 $\pm$ 0.1 & 22.7 $\pm$ 2.9 & 70.9 $\pm$ 4.0 & 85.6 $\pm$ 2.2 \\
HIPS-env & 0.0 $\pm$ 0.0 & 0.0 $\pm$ 0.0 & 0.0 $\pm$ 0.0 & 14.4 $\pm$ 2.0 & 64.5 $\pm$ 4.4 & 83.2 $\pm$ 2.6 \\
AdaSubS & 0.0 $\pm$ 0.0 & 0.0 $\pm$ 0.0 & 0.0 $\pm$ 0.0 & 0.0 $\pm$ 0.0 & 0.0 $\pm$ 0.0 & 0.0 $\pm$ 0.0 \\
kSubS & 0.0 $\pm$ 0.0 & 0.0 $\pm$ 0.0 & 0.0 $\pm$ 0.0 & 58.2 $\pm$ 5.3 & 88.2 $\pm$ 1.4 & 89.9 $\pm$ 1.4 \\
\midrule
HIPS-$\varepsilon$ & 0.0 $\pm$ 0.0 & 0.1 $\pm$ 0.0 & 18.1 $\pm$ 3.1 & 71.8 $\pm$ 4.3 & 95.2 $\pm$ 1.3 & 98.9 $\pm$ 0.4 \\
HIPS-$\varepsilon$-env & 0.0 $\pm$ 0.0 & 0.0 $\pm$ 0.0 & 9.9 $\pm$ 1.2 & 65.4 $\pm$ 4.6 & 93.9 $\pm$ 1.6 & 98.3 $\pm$ 0.7 
\\[3mm]
\multicolumn{1}{c|}{} & \multicolumn{6}{c}{\textit{Box-World}} \\
\midrule
$s$ & 1 & 2 & 5 & 10 & 20 & 60 \\
\midrule
PHS* & 4.5 $\pm$ 0.7 & 9.3 $\pm$ 1.1 & 34.9 $\pm$ 3.3 & \textbf{73.6} $\pm$ 2.9 & 91.4 $\pm$ 0.6 & 92.4 $\pm$ 0.5 \\
HIPS & 0.0 $\pm$ 0.0 & 25.4 $\pm$ 1.5 & 38.0 $\pm$ 3.1 & 61.3 $\pm$ 3.8 & 88.9 $\pm$ 2.1 & 99.6 $\pm$ 0.1 \\
HIPS-env & 0.0 $\pm$ 0.0 & \textbf{28.6} $\pm$ 1.8 & \textbf{39.5} $\pm$ 2.6 & 63.4 $\pm$ 3.3 & 89.3 $\pm$ 2.3 & 99.7 $\pm$ 0.1 \\
\midrule
HIPS-$\varepsilon$ & \textbf{5.9} $\pm$ 0.7 & 14.6 $\pm$ 2.2 & 30.0 $\pm$ 3.5 & 64.1 $\pm$ 3.5 & 93.4 $\pm$ 1.1 & 99.7 $\pm$ 0.1 \\
HIPS-$\varepsilon$-env & 5.3 $\pm$ 0.5 & 12.3 $\pm$ 1.5 & 28.1 $\pm$ 3.2 & 61.4 $\pm$ 3.9 & \textbf{94.2} $\pm$ 1.2 & \textbf{99.9} $\pm$ 0.0
\\[3mm]
\multicolumn{1}{c|}{} & \multicolumn{6}{c}{\textit{Travelling Salesman Problem}}\\
\midrule
$s$ & 3.5 & 5 & 7.5 & 10 & 20 & 60 \\
\midrule
PHS* & 0.3 $\pm$ 0.1 & 0.8 $\pm$ 0.2 & 10.6 $\pm$ 3.4 & 39.2 $\pm$ 8.8 & 87.7 $\pm$ 3.5 & 93.5 $\pm$ 2.3 \\
HIPS & 0.3 $\pm$ 0.3 & 19.0 $\pm$ 6.0 & 75.9 $\pm$ 8.0 & 97.6 $\pm$ 1.0 & 99.0 $\pm$ 0.0 & \textbf{100.0} $\pm$ 0.0 \\
HIPS-env & 8.8 $\pm$ 3.6 & 61.6 $\pm$ 12.6 & 98.5 $\pm$ 0.4 & \textbf{98.9} $\pm$ 0.1 & 99.5 $\pm$ 0.1 & \textbf{100.0} $\pm$ 0.0 \\
AdaSubS & 0.0 $\pm$ 0.0 & 0.0 $\pm$ 0.0 & 0.0 $\pm$ 0.0 & 0.0 $\pm$ 0.0 & 0.0 $\pm$ 0.0 & 0.0 $\pm$ 0.0 \\
kSubS & 0.0 $\pm$ 0.0 & 0.0 $\pm$ 0.0 & 0.0 $\pm$ 0.0 & 0.0 $\pm$ 0.0 & 0.0 $\pm$ 0.0 & 33.4 $\pm$ 12.9\\
\midrule
HIPS-$\varepsilon$ & 4.1 $\pm$ 1.5 & 49.2 $\pm$ 9.0 & 96.6 $\pm$ 1.3 & 98.8 $\pm$ 0.1 & \textbf{99.9} $\pm$ 0.0 & \textbf{100.0} $\pm$ 0.0 \\
HIPS-$\varepsilon$-env & \textbf{29.1} $\pm$ 8.5 & \textbf{90.0} $\pm$ 3.9 & \textbf{98.8} $\pm$ 0.2 & \textbf{98.9} $\pm$ 0.0 & 99.8 $\pm$ 0.1 & \textbf{100.0} $\pm$ 0.0
\end{tabular}
\label{tab:runningTime}
\end{adjustbox}
\end{table*}

\clearpage
\newpage

\section{Environment Interactions}
\label{app:steps}

In addition to the number of search node expansions and running time, the search cost can be evaluated as a function of the low-level environment steps during the search. We compare HIPS-$\varepsilon$ with a learned environment model and guaranteed completeness (see Appendix~\ref{app:modelcompleteness}) to PHS*, kSubS, and AdaSubS and analyzed the percentage of puzzles solved given $N$ environment steps. We omitted HIPS from the comparison, as the variant of HIPS with environment dynamics is extremely wasteful with the environment steps, whereas HIPS with the learned models does not rely on the environment simulator at all, and we compared HIPS-$\varepsilon$ to HIPS in that setting in Table~\ref{tab:model_solved}. We also included the results for HIPS-$\varepsilon$ when the number of simulator calls has been included in the search cost as the last row of the table. Our results show that HIPS-$\varepsilon$ is very efficient in terms of environment interactions, outperforming PHS* in every environment, even if the number of model calls is included in the search cost. kSubS is also highly wasteful with the low-level environment steps, whereas AdaSubS is highly competitive with HIPS-$\varepsilon$ in Sokoban but fails to perform well in STP or TSP due to the lower overall solution percentage.

\begin{figure}[h]
    \centering
    \begin{subfigure}[b]{0.49\textwidth}
        \centering
        \includegraphics[width=\textwidth,trim={5mm 0mm 16mm 7mm},clip]{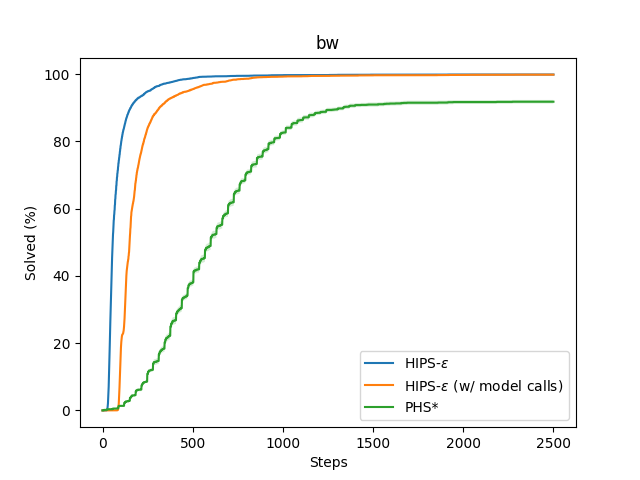}
        \label{fig:lowlevelsteps_bw}
    \end{subfigure}
    \hfill
    \begin{subfigure}[b]{0.49\textwidth}
        \centering
        \includegraphics[width=\textwidth,trim={5mm 0mm 16mm 7mm},clip]{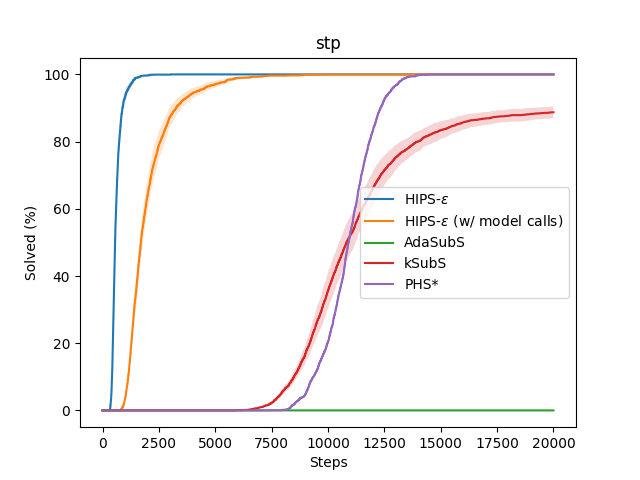}
        \label{fig:lowlevelsteps_stp}
    \end{subfigure}
    \\
    \begin{subfigure}[b]{0.49\textwidth}
        \centering
        \includegraphics[width=\textwidth,trim={5mm 0mm 16mm 7mm},clip]{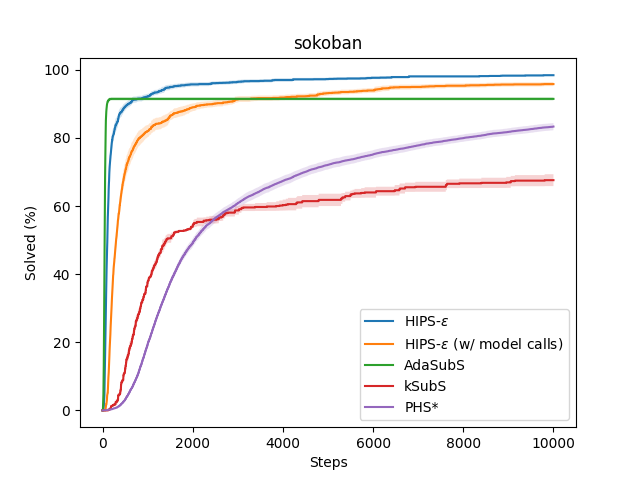}
        \label{fig:lowlevelsteps_sokoban}
    \end{subfigure}
    \hfill
    \begin{subfigure}[b]{0.49\textwidth}
        \centering
        \includegraphics[width=\textwidth,trim={5mm 0mm 16mm 7mm},clip]{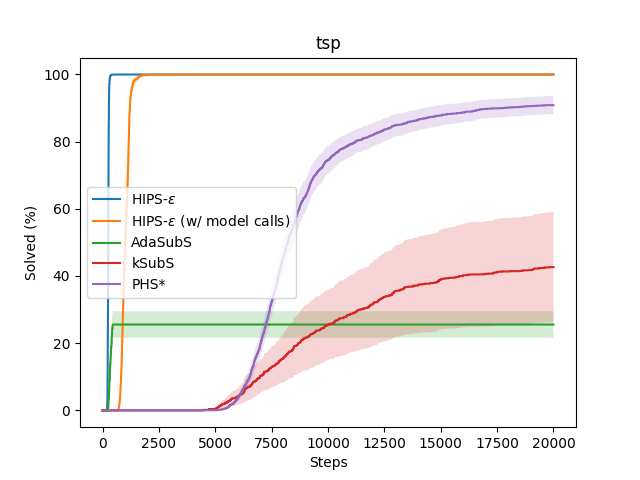}
        \label{fig:lowlevelsteps_tsp}
    \end{subfigure}
    \caption{The mean percentage of puzzles solved (\%) as a function of the number of environment steps for different search methods. For HIPS-$\varepsilon$, we have also plotted the solution percentage, assuming that each dynamics function call is equal to an environment step. The shaded area is one standard error. For HIPS-$\varepsilon$, we use the same values of $\varepsilon$ as in Table~\ref{tab:full_searchExpansions}. We use PHS* as the underlying search algorithm in Sokoban, STP and Box-World, and A* in TSP.}
    \label{fig:lowlevelsteps}
\end{figure}

\begin{table*}[h]
\caption{The mean success rates (\%) after $N$ low-level environment steps during the search and the standard error of the mean as the uncertainty metric.}
\centering
\begin{adjustbox}{width=\textwidth}
\begin{tabular}{l|rrrrrr}
\multicolumn{1}{c|}{} & \multicolumn{6}{c}{\textit{Sokoban}} \\
\midrule
$N$ & 50 & 100 & 250 & 1000 & 2500 & 10000 \\
\midrule
PHS* & 0.0 $\pm$ 0.0 & 0.1 $\pm$ 0.0 & 0.6 $\pm$ 0.1 & 19.5 $\pm$ 0.5 & 56.5 $\pm$ 1.3 & 83.3 $\pm$ 1.1 \\
AdaSubS & \textbf{30.9} $\pm$ 1.3 & \textbf{60.3} $\pm$ 0.7 & 74.4 $\pm$ 0.8 & 85.2 $\pm$ 0.9 & 88.0 $\pm$ 0.9 & 90.2 $\pm$ 0.9 \\
kSubS & 0.0 $\pm$ 0.0 & 0.0 $\pm$ 0.0 & 1.7 $\pm$ 0.5 & 37.9 $\pm$ 0.4 & 56.0 $\pm$ 1.1 & 67.6 $\pm$ 1.7 \\
\midrule
HIPS-$\varepsilon$ & 9.7 $\pm$ 0.7 & 47.4 $\pm$ 2.3 & \textbf{81.3} $\pm$ 2.0 & \textbf{92.2} $\pm$ 0.8 & \textbf{96.1} $\pm$ 0.5 & \textbf{98.4} $\pm$ 0.2 \\
\midrule
HIPS-$\varepsilon$ (w/ model calls) & 0.0 $\pm$ 0.0 & 3.9 $\pm$ 0.3 & 42.4 $\pm$ 2.7 & 82.0 $\pm$ 1.8 & 90.1 $\pm$ 1.0 & 95.8 $\pm$ 0.6
\\[3mm]
\multicolumn{1}{c|}{} & \multicolumn{6}{c}{\textit{Sliding Tile Puzzle}} \\
\midrule
$N$ & 500 & 1000 & 2500 & 5000 & 10000 & 20000 \\
\midrule
PHS* & 0.0 $\pm$ 0.0 & 0.0 $\pm$ 0.0 & 0.0 $\pm$ 0.0 & 0.0 $\pm$ 0.0 & 20.6 $\pm$ 0.5 & \textbf{100.0} $\pm$ 0.0 \\
AdaSubS & 0.0 $\pm$ 0.0 & 0.0 $\pm$ 0.0 & 0.0 $\pm$ 0.0 & 0.0 $\pm$ 0.0 & 0.0 $\pm$ 0.0 & 0.0 $\pm$ 0.0 \\
kSubS & 0.0 $\pm$ 0.0 & 0.0 $\pm$ 0.0 & 0.0 $\pm$ 0.0 & 0.0 $\pm$ 0.0 & 35.6 $\pm$ 5.4 & 88.7 $\pm$ 1.7 \\
\midrule
HIPS-$\varepsilon$ & \textbf{32.7} $\pm$ 3.5 & \textbf{93.1} $\pm$ 1.7 & \textbf{99.9} $\pm$ 0.1 & \textbf{100.0} $\pm$ 0.0 & \textbf{100.0} $\pm$ 0.0 & \textbf{100.0} $\pm$ 0.0 \\
\midrule
HIPS-$\varepsilon$ (w/ model calls) & 0.0 $\pm$ 0.0 & 3.6 $\pm$ 0.8 & 78.9 $\pm$ 4.1 & 97.0 $\pm$ 1.0 & 99.9 $\pm$ 0.1 & \textbf{100.0} $\pm$ 0.0
\\[3mm]
\multicolumn{1}{c|}{} & \multicolumn{6}{c}{\textit{Box-World}} \\
\midrule
$N$ & 50 & 100 & 250 & 500 & 1000 & 2500 \\
\midrule
PHS* & 0.4 $\pm$ 0.2 & 1.4 $\pm$ 0.2 & 10.9 $\pm$ 0.5 & 38.1 $\pm$ 1.0 & 82.6 $\pm$ 0.7 & 91.8 $\pm$ 0.5 \\
\midrule
HIPS-$\varepsilon$ & \textbf{40.1} $\pm$ 1.0 & \textbf{78.7} $\pm$ 1.1 & \textbf{94.9} $\pm$ 0.3 & \textbf{98.8} $\pm$ 0.1 & \textbf{99.7} $\pm$ 0.0 & \textbf{99.9} $\pm$ 0.0 \\
\midrule
HIPS-$\varepsilon$ (w/ model calls) & 0.0 $\pm$ 0.0 & 16.4 $\pm$ 0.4 & 84.0 $\pm$ 0.9 & 95.5 $\pm$ 0.3 & 99.3 $\pm$ 0.1 & 99.8 $\pm$ 0.0
\\[3mm]
\multicolumn{1}{c|}{} & \multicolumn{6}{c}{\textit{Travelling Salesman Problem}}\\
\midrule
$N$ & 250 & 500 & 1000 & 2500 & 10000 & 20000 \\
\midrule
PHS* & 0.0 $\pm$ 0.0 & 0.0 $\pm$ 0.0 & 0.0 $\pm$ 0.0 & 0.0 $\pm$ 0.0 & 74.5 $\pm$ 4.1 & 90.9 $\pm$ 2.7 \\
AdaSubS & 0.0 $\pm$ 0.0 & 0.0 $\pm$ 0.0 & 3.3 $\pm$ 1.6 & 14.4 $\pm$ 2.4 & 24.4 $\pm$ 3.3 & 24.4 $\pm$ 3.3 \\
kSubS & 0.0 $\pm$ 0.0 & 0.0 $\pm$ 0.0 & 0.0 $\pm$ 0.0 & 0.0 $\pm$ 0.0 & 25.5 $\pm$ 10.5 & 42.6 $\pm$ 16.4 \\
\midrule
HIPS-$\varepsilon$ & \textbf{44.9} $\pm$ 4.4 & \textbf{99.9} $\pm$ 0.0 & \textbf{100.0} $\pm$ 0.0 & \textbf{100.0} $\pm$ 0.0 & \textbf{100.0} $\pm$ 0.0 & \textbf{100.0} $\pm$ 0.0 \\
\midrule
HIPS-$\varepsilon$ (w/ model calls) & 0.0 $\pm$ 0.0 & 0.0 $\pm$ 0.0 & 50.2 $\pm$ 12.7 & 99.9 $\pm$ 0.0 & \textbf{100.0} $\pm$ 0.0 & \textbf{100.0} $\pm$ 0.0 
\end{tabular}
\end{adjustbox}
\label{tab:lowlevelsteps}
\end{table*}

\end{document}